\theoremstyle{definition} 
\newtheorem{definition}{Definition}
\newtheorem{theorem}{Theorem}
\newtheorem{lemma}{Lemma}
\newcommand{\bs}[1]{\boldsymbol{#1}}
\newcommand{\eg}{\emph{e.g.}}
\newcommand{\ie}{\emph{i.e.}}
\newcommand{\spj}{\textbf{JoSE}\xspace}
\newcommand{\Sph}{\mathbb{S}}
\newcommand{\Rea}{\mathbb{R}}
\title{ Spherical Text Embedding }
\author{%
  Yu Meng${}^1$, Jiaxin Huang${}^1$, Guangyuan Wang${}^1$, Chao Zhang${}^2$, \\ \textbf{Honglei Zhuang${}^1$\thanks{Currently at Google Research.}, Lance Kaplan${}^3$, Jiawei Han${}^1$}
   \\
  ${}^1$ Department of Computer Science, University of Illinois at Urbana-Champaign\\
  ${}^2$ College of Computing, Georgia Institute of Technology\\
  ${}^3$ U.S. Army Research Laboratory\\
  ${}^1$ \texttt{\{yumeng5,jiaxinh3,gwang10,hzhuang3,hanj\}@illinois.edu} \\
  ${}^2$ \texttt{chaozhang@gatech.edu} \ \ \ ${}^3$ \texttt{lance.m.kaplan.civ@mail.mil}\\
}
\begin{document}

\maketitle

%
\begin{abstract}
	
Unsupervised text embedding has shown great power in a wide range of NLP tasks. While text embeddings are typically learned in the Euclidean space, directional similarity is often more effective in tasks such as word similarity and document clustering, which creates a gap between the training stage and usage stage of text embedding. To close this gap, we propose a spherical generative model based on which unsupervised word and paragraph embeddings are jointly learned. To learn text embeddings in the spherical space, we develop an efficient optimization algorithm with convergence guarantee based on Riemannian optimization. Our model enjoys high efficiency and achieves state-of-the-art performances on various text embedding tasks including word similarity and document clustering\footnote{Source code can be found at \url{https://github.com/yumeng5/Spherical-Text-Embedding}.}.

	
\end{abstract}

\section{Introduction}

Recent years have witnessed enormous success of unsupervised text embedding
techniques~\cite{Mikolov2013EfficientEO,Mikolov2013DistributedRO,
  Pennington2014GloveGV} in various natural language processing and text mining
tasks. Such techniques capture the semantics of textual units (\eg, words,
paragraphs) via learning low-dimensional distributed representations in an
unsupervised way, which can be either directly used as feature representations
or further fine-tuned with training data from downstream supervised tasks.
Notably, the popular Word2Vec method~\cite{Mikolov2013EfficientEO,Mikolov2013DistributedRO} learns word embeddings in the Euclidean space, by modeling local word co-occurrences in the corpus.
This strategy has later been extended to obtain embeddings of other textual
units such as sentences~\cite{Arora2017ASB,Kiros2015SkipThoughtV} and
paragraphs~\cite{Le2014DistributedRO}.

Despite the success of unsupervised text embedding techniques, an intriguing
gap exists between the training procedure and the practical usages of the
learned embeddings. While the embeddings are learned in the Euclidean space, it
is often the directional similarity between word vectors that captures word
semantics more effectively. Across a wide range of word similarity and document
clustering tasks~\cite{Banerjee2005ClusteringOT,Gopal2014VonMC,Levy2014LinguisticRI}, it is common practice to either use
cosine similarity as the similarity metric or first normalize word and document vectors
before computing textual similarities. Current procedures of training text
embeddings in the Euclidean space and using their similarities in the spherical
space is clearly suboptimal. After projecting the embedding from Euclidean
space to spherical space, the optimal solution to the loss function in the
original space may not remain optimal in the new space.

In this work, we propose a method that learns spherical text embeddings in an
unsupervised way. In contrast to existing techniques that learn text embeddings
in the Euclidean space and use normalization as a post-processing step, we
directly learn text embeddings in a spherical space by imposing unit-norm
constraints on embeddings. Specifically, we define a two-step generative model
on the surface of a unit sphere: A word is first generated according to the
semantics of the paragraph, and then the surrounding words are generated in
consistency with the center word's semantics. We cast the learning of the
generative model as an optimization problem and propose an efficient Riemannian
optimization procedure to learn spherical text embeddings.

Another major advantage of our spherical text embedding model is that it can
jointly learn word embeddings and paragraph embeddings. This property naturally
stems from our two-step generative process, where the generation of a word is
dependent on its belonging paragraph with a von Mishes-Fisher distribution in
the spherical space. Explicitly modeling the generative relationships between
words and their belonging paragraphs allows paragraph embeddings to be directly
obtained during the training stage. Furthermore, it allows the model to learn
better word embeddings by jointly exploiting word-word and word-paragraph
co-occurrence statistics; this is distinct from existing word embedding techniques that
learn word embeddings only based on word co-occurrences
\cite{Bojanowski2017EnrichingWV,Mikolov2013EfficientEO,
  Mikolov2013DistributedRO,Pennington2014GloveGV} in the corpus.

\paragraph{Contributions.} (1) We propose to learn text embeddings in the
spherical space which addresses the mismatch
issue between training and using embeddings of previous Euclidean embedding models; (2) We propose a two-step
generative model that jointly learns unsupervised word and paragraph
embeddings by exploiting word-word and word-paragraph
co-occurrence statistics; (3) We develop an efficient optimization algorithm in the spherical space with convergence guarantee; (4) Our model achieves state-of-the-art performances on various text embedding applications.

\section{Related Work}


\subsection{Text Embedding}
Most unsupervised text embedding models such as \cite{Bojanowski2017EnrichingWV,Kiros2015SkipThoughtV,Le2014DistributedRO,Mikolov2013EfficientEO,Mikolov2013DistributedRO,Pennington2014GloveGV,Tang2015PTEPT,Wieting2015TowardsUP} are trained in the Euclidean space. The embeddings are trained to capture semantic similarity of textual units based on co-occurrence statistics, and demonstrate effectiveness on various text semantics tasks such as named entity recognition~\cite{Lample2016NeuralAF}, text classification~\cite{Kim2014ConvolutionalNN,Meng2018WeaklySupervisedNT,Meng2019WeaklySupervisedHT,Tao2018Doc2C} and machine translation~\cite{Cho2014LearningPR}. Recently, non-Euclidean embedding space has been explored for learning specific structural representations. Poincar\'e~\cite{Dhingra2018EmbeddingTI,Nickel2017PoincarEF,Tifrea2018PoincarGH}, Lorentz~\cite{Nickel2018LearningCH} and hyperbolic cone~\cite{Ganea2018HyperbolicEC} models have proven successful on learning hierarchical representations in a hyperbolic space for tasks such as lexical entailment and link prediction. Our model also learns unsupervised text embeddings in a non-Euclidean space, but still for general text embedding applications including word similarity and document clustering.

\subsection{Spherical Space Models}
Previous works have shown that the spherical space is a superior choice for tasks focusing on directional similarity. For example, normalizing document tf-idf vectors is common practice when used as features for document clustering and classification, which helps regularize the vector against the document length and leads to better document clustering performance~\cite{Banerjee2005ClusteringOT,Gopal2014VonMC}.
Spherical generative modeling~\cite{Batmanghelich2016NonparametricST,Zhang2017TrioVecEventEO,Zhuang2017IdentifyingSD} models the distribution of words on the unit sphere, motivated by the effectiveness of directional metrics over word embeddings. Recently, spherical models also show great effectiveness in deep learning. Spherical normalization~\cite{Liu2017DeepHL} on the input leads to easier optimization, faster convergence and better accuracy of neural networks. Also, a spherical loss function can be used to replace the conventional softmax layer in language generation tasks, which results in faster and better generation quality~\cite{Kumar2018VonML}. Motivated by the success of these models, we propose to learn unsupervised text embeddings in the spherical space so that the embedding space discrepancy between training and usage can be eliminated, and directional similarity is more effectively captured.


\section{Spherical Text Embedding}

In this section, we introduce the spherical generative model for jointly learning word and paragraph embeddings and the corresponding loss function. 

\subsection{The Generative Model}
The design of our generative model is inspired by the way humans write articles: Each word should be semantically consistent with not only its surrounding words, but also the entire paragraph/document. Specifically, we assume text generation is a two-step process: A center word is first generated according to the semantics of the paragraph, and then the surrounding words are generated based on the center word's semantics\footnote{Like previous works, we assume each word has independent center word representation and context word representation, and thus the generation processes of a word as a center word and as a context word are independent.}. Further, we assume the direction in the spherical embedding space captures textual semantics, and higher directional similarity implies higher co-occurrence probability. Hence, we model the text generation process as follows: Given a paragraph $d$, a center word $u$ is first generated by 
\begin{equation}
\label{eq:gen_u}
p(u \mid d) \propto \exp (\cos(\bs{u}, \bs{d})),
\end{equation}
and then a context word is generated by
\begin{equation}
\label{eq:gen_v}
p(v \mid u) \propto \exp (\cos(\bs{v}, \bs{u})),
\end{equation}
where $\|\bs{u}\| = \|\bs{v}\| = \|\bs{d}\| = 1$, and $\cos(\cdot, \cdot)$ denotes the cosine of the angle between two vectors on the unit sphere.

Next we derive the analytic forms of Equations~\eqref{eq:gen_u} and \eqref{eq:gen_v}.
\begin{theorem}
\label{thm:vmf}
When the corpus has infinite vocabulary, \ie, $|V| \to \infty$, the analytic forms of Equations~\eqref{eq:gen_u} and \eqref{eq:gen_v} are given by the von Mises-Fisher (vMF) distribution with the prior embedding as the mean direction and constant $1$ as the concentration parameter, \ie,
\begin{equation*}
\lim_{|V| \to \infty} p(v \mid u) = \text{vMF}_p(\bs{v}; \bs{u}, 1), \quad \lim_{|V| \to \infty} p(u \mid d) = \text{vMF}_p(\bs{u}; \bs{d}, 1).
\end{equation*}
\end{theorem}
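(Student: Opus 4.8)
The plan is to recognize that the unit-norm constraints already put the proportionality relations into vMF form, so the only genuine content of the theorem is the passage from a discrete normalization over the finite vocabulary $V$ to the continuous normalization that defines a vMF density. First I would note that since $\|\bs{u}\| = \|\bs{v}\| = 1$ we have $\cos(\bs{v}, \bs{u}) = \bs{u}^\top \bs{v}$, so Equation~\eqref{eq:gen_v} reads $p(v \mid u) = \exp(\bs{u}^\top \bs{v}) / Z(\bs{u})$ with partition function $Z(\bs{u}) = \sum_{v' \in V} \exp(\bs{u}^\top \bs{v'})$. The numerator is exactly the unnormalized vMF density with mean direction $\bs{u}$ and concentration $1$, so it suffices to show that $Z(\bs{u})$ converges to the correct vMF normalizer as $|V| \to \infty$.

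Second, I would make the infinite-vocabulary limit precise by treating the context embeddings as samples that become dense and asymptotically uniform on the sphere $\Sph^{p-1}$ as $|V| \to \infty$; this equidistribution is the substantive assumption hidden in the phrase ``$|V| \to \infty$.'' Under it, $Z(\bs{u})/|V|$ is a Monte-Carlo average of $\exp(\bs{u}^\top \cdot)$ against the uniform measure, so
\[
\frac{Z(\bs{u})}{|V|} \longrightarrow \frac{1}{\omega_{p-1}} \int_{\Sph^{p-1}} \exp(\bs{u}^\top \bs{x}) \, d\bs{x},
\]
where $\omega_{p-1}$ denotes the surface area of $\Sph^{p-1}$. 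Equivalently, the probability mass $\sum_{v' \in A} p(v'\mid u)$ that the discrete model assigns to any region $A \subseteq \Sph^{p-1}$ converges to the ratio of the corresponding integrals, which is the hallmark of weak convergence of the induced measures.

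Third, I would convert the discrete probability mass into a density by multiplying by the local density of embedding points, $|V|/\omega_{p-1}$. The factors of $|V|$ and $\omega_{p-1}$ cancel, leaving $\exp(\bs{u}^\top \bs{x}) / \int_{\Sph^{p-1}} \exp(\bs{u}^\top \bs{y}) \, d\bs{y}$; recognizing the integral as $1/C_p(1)$ through the Bessel-function form of the vMF normalizing constant $C_p(\kappa)$ identifies this limit as $C_p(1)\exp(\bs{u}^\top \bs{x}) = \text{vMF}_p(\bs{x}; \bs{u}, 1)$. The identical argument with $\bs{d}$ in place of $\bs{u}$ handles $p(u \mid d)$, giving $\text{vMF}_p(\bs{u}; \bs{d}, 1)$.

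The main obstacle is making this limit rigorous rather than heuristic: one must fix the sense of convergence (weak convergence of the discrete distributions to the vMF measure is the natural choice) and, more importantly, justify the equidistribution assumption, since the conclusion fails for adversarially clustered embeddings. I would therefore state the uniform-limit hypothesis explicitly and invoke dominated convergence — the integrand $\exp(\bs{u}^\top \bs{x})$ is bounded on the compact sphere — to control the Riemann-sum error uniformly in $\bs{u}$, so that both the numerator and the partition function converge to their integral counterparts and the ratio converges to the claimed vMF density.
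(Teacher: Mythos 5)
Your proposal is correct in substance, but it distributes its rigor in exactly the opposite way from the paper. The paper's proof treats the discrete-to-continuous passage as essentially definitional --- it simply replaces the vocabulary sum by the spherical integral $Z = \int_{\Sph^{p-1}} \exp(\cos(\bs{v}', \bs{u}))\, d\bs{v}'$ without comment --- and then spends all of its effort evaluating $Z$ from scratch: an orthogonal change of variables to polar coordinates with $\bs{u}$ as the first axis, a dedicated lemma computing $\int_0^\pi (\sin x)^p\, dx$ by recursion on $p$, and the integral representation of the modified Bessel function, yielding $Z = (2\pi)^{p/2} I_{p/2-1}(1) = 1/c_p(1)$. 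You do the reverse: you make the limit itself honest (equidistribution of embeddings on $\Sph^{p-1}$, weak convergence, the $|V|/\omega_{p-1}$ density rescaling, dominated convergence), which the paper glosses over and which you rightly flag as the hidden hypothesis, but at the point where the paper does its real computation you merely ``recognize'' the integral as $1/c_p(1)$. That recognition is legitimate if one takes as given that $c_p(\kappa)$ normalizes the vMF density --- a textbook fact of directional statistics, and implicit in the paper's own definition of vMF, since two proportional probability densities must coincide --- but it outsources the entire computational content of the paper's proof. For a self-contained argument you would still need the polar-coordinate/Bessel computation or an explicit citation for it; conversely, your treatment of the limit is a genuine improvement in rigor over the paper's, which never specifies in what sense the discrete distributions converge, nor that the conclusion depends on the embeddings filling the sphere uniformly.
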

The proof of Theorem~\ref{thm:vmf} can be found in Appendix~\ref{app:proof_1}.

The vMF distribution defines a probability density over a hypersphere and is parameterized by a mean vector $\bs{\mu}$ and a concentration parameter $\kappa$. The probability density closer to $\bs{\mu}$ is greater and the spread is controlled by $\kappa$. Formally, A unit random vector $\bs{x} \in \mathbb{S}^{p-1}$ has the $p$-variate vMF distribution $\text{vMF}_p(\bs{x}; \bs{\mu},\kappa)$ if its probability dense function is
\begin{equation*}
\label{eq:vmf}
f(\bs{x};\bs{\mu},\kappa) = c_p(\kappa)\exp \left(\kappa \cdot \cos(\bs{x}, \bs{\mu}) \right),
\end{equation*}
where $\|\bs{\mu}\| = 1$ is the mean direction, $\kappa \ge 0$ is the concentration parameter, and the normalization constant $c_p(\kappa)$ is given by
$$
c_p(\kappa) = \frac{\kappa^{p/2-1}}{(2\pi)^{p/2} I_{p/2-1}(\kappa)},
$$
where $I_r(\cdot)$ represents the modified Bessel function of the first kind at order $r$, given by Definition~\ref{def:bess} in the appendix. 

Finally, the probability density function of a context word $v$ appearing in a center word $u$'s local context window in a paragraph/document $d$ is given by
\begin{equation*}
p(v, u \mid d) \propto p(v \mid u) \cdot p(u \mid d)
\propto \text{vMF}_p(\bs{v}; \bs{u}, 1) \cdot \text{vMF}_p(\bs{u}; \bs{d}, 1).
\end{equation*}

\subsection{Objective}
Given a positive training tuple $(u, v, d)$ where $v$ appears in the local context window of $u$ in paragraph $d$, we aim to maximize the probability $p(v, u \mid d)$, while minimize the probability $p(v, u' \mid d)$ where $u'$ is a randomly sampled word from the vocabulary serving as a negative sample. This is similar to the negative sampling technique used in Word2Vec~\cite{Mikolov2013DistributedRO} and GloVe~\cite{Pennington2014GloveGV}. To achieve this, we employ a max-margin loss function, similar to \cite{Ganea2018HyperbolicEC, Vendrov2016OrderEmbeddingsOI, Vilnis2015WordRV}, and push the log likelihood of the positive tuple over the negative one by a margin: 
\begin{align}
\label{eq:obj_joint}
\begin{split}
\mathcal{L}(\bs{u}, \bs{v}, \bs{d}) &= \max \bigg(0, m - \log \big( c_p(1) \exp (\cos(\bs{v}, \bs{u})) \cdot c_p(1) \exp (\cos(\bs{u}, \bs{d})) \big) \\
& \quad \quad \quad \quad + \log \big( c_p(1) \exp (\cos(\bs{v}, \bs{u}')) \cdot c_p(1) \exp (\cos(\bs{u}', \bs{d})) \big) \bigg)\\
&= \max \left(0, m - \cos(\bs{v}, \bs{u}) - \cos(\bs{u}, \bs{d}) + \cos(\bs{v}, \bs{u}') + \cos(\bs{u}', \bs{d}) \right),
\end{split}
\end{align}
where $m > 0$ is the margin.

\section{Optimization}
In this section, we describe the approach to optimize the objective introduced in the previous section on the unit sphere.

\subsection{The Constrained Optimization Problem}

The unit hypersphere $\Sph^{p-1}\coloneqq \{\bs{x}\in \Rea^{p} \mid \|\bs{x}\| = 1 \}$ is the common choice for spherical space optimization problems. The text embedding training is thus a constrained optimization problem:
\begin{equation*}
\min_{\bs{\Theta}} \mathcal{L}(\bs{\Theta}) \quad \text{s.t.} \quad \forall \bs{\theta} \in \bs{\Theta} : \|\bs{\theta}\|=1,
\end{equation*}
where $\bs{\Theta} = \{\bs{u}_i\}\big\rvert_{i=1}^{|V|} \cup \{\bs{v}_i\}\big\rvert_{i=1}^{|V|} \cup \{\bs{d}_i\}\big\rvert_{i=1}^{|\mathcal{D}|}$ is the set of target word embeddings, context word embeddings and paragraph embeddings to be learned.

Since the optimization problem is constrained on the unit sphere, the Euclidean space optimization methods such as SGD cannot be used to optimize our objective, because the Euclidean gradient provides the update direction in a non-curvature space, while the parameters in our model must be updated on a surface with constant positive curvature. Therefore, we need to base our embedding training problem on Riemannian optimization.

\subsection{Preliminaries} 
A Riemannian manifold $(\mathcal{M}, g)$ is a real, smooth manifold whose tangent spaces are endowed with a smoothly varying inner product $g$, also called the Riemannian metric. Let $T_{\bs{x}}\mathcal{M}$ denote the tangent space at $\bs{x} \in \mathcal{M}$, then $g$ defines the inner product $\langle \cdot, \cdot \rangle_{\bs{x}} : T_{\bs{x}}\mathcal{M} \times T_{\bs{x}}\mathcal{M} \to \Rea$. A unit sphere $\Sph^{p-1}$ can be considered as a Riemmannian submanifold of $\Rea^{p}$, and its Riemannian metric can be inherited from $\Rea^{p}$, \ie, $\langle \bs{\alpha}, \bs{\beta} \rangle_{\bs{x}} \coloneqq \bs{\alpha}^\top \bs{\beta}$.

The intrinsic distance on the unit sphere between two arbitrary points $\bs{x}, \bs{y} \in \Sph^{p-1}$ is defined by $d(\bs{x}, \bs{y}) \coloneqq \arccos (\bs{x}^\top\bs{y})$. A geodesic segment $\gamma : [a, b] \to \Sph^{p-1}$ is the generalization of a straight line to the sphere, and it is said to be minimal if it equals to the intrinsic distance between its end points, \ie, $\ell(\gamma)=\arccos (\gamma(a)^\top\gamma(b))$. 

Let $T_{\bs{x}}\Sph^{p-1}$ denote the tangent hyperplane at $\bs{x} \in \Sph^{p-1}$, \ie, $T_{\bs{x}}\Sph^{p-1} \coloneqq \{\bs{y}\in \Rea^{p} \mid \bs{x}^\top\bs{y} = 0\}$. The projection onto $T_{\bs{x}}\Sph^{p-1}$ is given by the linear mapping $I - \bs{x}\bs{x}^\top :\Rea^{p} \to T_{\bs{x}}\Sph^{p-1}$ where $I$ is the identity matrix. The exponential mapping $\exp_{\bs{x}} : T_{\bs{x}}\Sph^{p-1} \to \Sph^{p-1}$ projects a tangent vector $\bs{z} \in T_{\bs{x}}\Sph^{p-1}$ onto the sphere such that $\exp_{\bs{x}}(\bs{z}) = \bs{y}$, $\gamma(0)=\bs{x}$, $\gamma(1)=\bs{y}$ and $\frac{\partial}{\partial t}\gamma(0) = \bs{z}$. 

\subsection{Riemannian Optimization} 
Since the unit sphere is a Riemannian manifold, we can optimize our objectives with Riemannian SGD \cite{Bonnabel2013StochasticGD, Smith2014OptimizationTO}. Specifically, the parameters are updated by
\begin{equation*}
\bs{x}_{t+1} = \exp_{\bs{x}_t}\left(-\eta_t \text{grad}\, f(\bs{x}_t) \right),
\end{equation*}
where $\eta_t$ denotes the learning rate and $\text{grad}\, f(\bs{x}_t) \in T_{\bs{x}_t}\Sph^{p-1}$ is the Riemannian gradient of a differentiable function $f: \Sph^{p-1} \to \Rea$.

On the unit sphere, the exponential mapping $\exp_{\bs{x}} : T_{\bs{x}}\Sph^{p-1} \to \Sph^{p-1}$ is given by
\begin{equation}
\label{eq:exp}
\exp_{\bs{x}}(\bs{z}) \coloneqq \begin{cases}
\cos(\|\bs{z}\|) \bs{x} + \sin (\|\bs{z}\|) \frac{\bs{z}}{\|\bs{z}\|}, & \bs{z} \in T_{\bs{x}}\Sph^{p-1}\backslash\{\bs{0}\}, \\
\bs{x}, & \bs{z} = \bs{0}.
\end{cases}
\end{equation}

To derive the Riemannian gradient $\text{grad}\, f(\bs{x})$ at $\bs{x}$, we view $\Sph^{p-1}$ as a Riemannian submanifold of $\Rea^{p}$ endowed with the canonical Riemannian metric $\langle \bs{\alpha}, \bs{\beta} \rangle_{\bs{x}} \coloneqq \bs{\alpha}^\top \bs{\beta}$. Then the Riemannian gradient is obtained by using the linear mapping  $I - \bs{x}\bs{x}^\top :\Rea^{p} \to T_{\bs{x}}\Sph^{p-1}$ to project the Euclidean gradient $\nabla f (\bs{x})$ from the ambient Euclidean space onto the tangent hyperplane \cite{Absil2007OptimizationAO, Ferreira2014ConceptsAT}, \ie,
\begin{equation}
\label{eq:rie_grad}
\text{grad}\, f(\bs{x}) \coloneqq \left(I - \bs{x}\bs{x}^\top\right) \nabla f (\bs{x}).
\end{equation}

\subsection{Training Details} 
We describe two sets of design that lead to more efficient and effective training of the above optimization procedure.

First, the exponential mapping requires computation of non-linear functions, specifically, $\sin(\cdot)$ and $\cos(\cdot)$ in Equation (\ref{eq:exp}), which is inefficient especially when the corpus is large. To tackle this issue, we can use a first-order approximation of the exponential mapping, called a retraction, \ie, $R_{\bs{x}}\left(\bs{z} \right) : T_{\bs{x}}\Sph^{p-1} \to \Sph^{p-1}$ such that $d(R_{\bs{x}}\left(\bs{z} \right), \exp_{\bs{x}}(\bs{z})) = O(\|\bs{z}\|^2)$. For the unit sphere, we can simply define the retraction $R_{\bs{x}}\left(\bs{z} \right)$ to be an addition in the ambient Euclidean space followed by a projection onto the sphere \cite{Absil2007OptimizationAO, Bonnabel2013StochasticGD}, \ie, 
\begin{equation}
\label{eq:rect}
R_{\bs{x}}\left(\bs{z} \right) \coloneqq \frac{\bs{x} + \bs{z}}{\|\bs{x} + \bs{z}\|}.
\end{equation}

\begin{figure}[t]
\centering
\includegraphics[width=\linewidth]{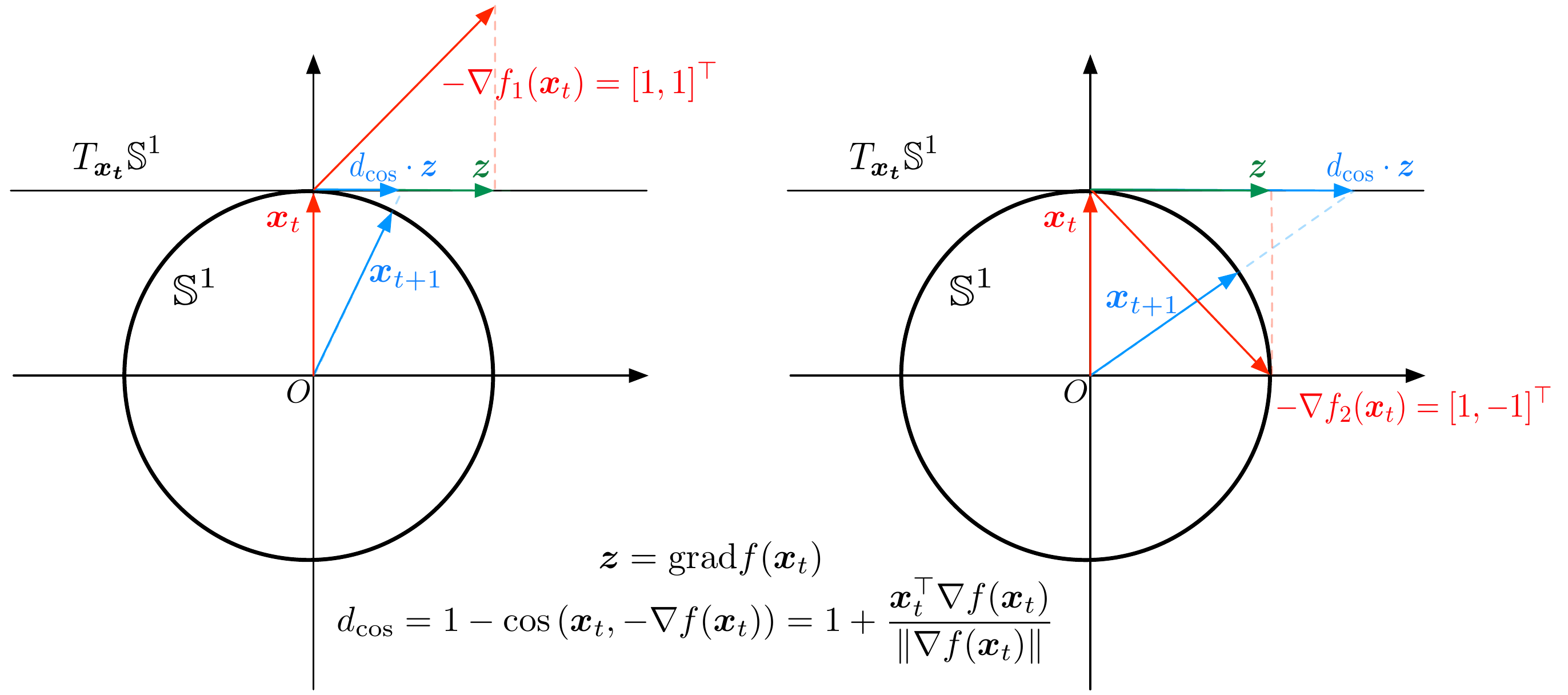}
\caption{Example of modified Riemannian gradient descent on $\Sph^1$. Without modification, two Euclidean descent directions $-\nabla f_1(\bs{x}_t)$ and $-\nabla f_2(\bs{x}_t)$ give the same Riemannian gradient $\bs{z}$. We propose to multiply $\bs{z}$ with the cosine distance  between $\bs{x}_t$ and $-\nabla f(\bs{x}_t)$ as the modified Riemannian gradient so that angular distances are taken into account during the parameter update.}
\label{fig:gradient}
\end{figure}

Second, the Riemannian gradient given by Equation (\ref{eq:rie_grad}) provides the correct direction to update the parameters on the sphere, but its norm is not optimal when our goal is to train embeddings that capture directional similarity. This issue can be illustrated by the following toy example shown in Figure \ref{fig:gradient}: Consider a point $\bs{x}_t$ with Euclidean coordinate $(0, 1)$ on a 2-d unit sphere $\Sph^{1}$ and two Euclidean gradient descent directions $-\nabla f_1(\bs{x}_t)=[1,1]^\top$ and $-\nabla f_2(\bs{x}_t)=[1,-1]^\top$. Also, for simplicity, assume $\eta_t=1$. In this case, the Riemannian gradient projected from $-\nabla f_2(\bs{x}_t)$ is the same with that of $-\nabla f_1(\bs{x}_t)$ and is equal to $[1,0]^\top$, \ie, $\text{grad}\, f_1(\bs{x}_t) = \text{grad}\, f_2(\bs{x}_t) = [1,0]^\top$. However, when our goal is to capture directional information and distance is measured by the angles between vectors, $-\nabla f_2(\bs{x}_t)$ suggests a bigger step to take than $-\nabla f_1(\bs{x}_t)$ at point $\bs{x}_t$. To explicitly incorporate angular distance into the optimization procedure, we use the cosine distance between the current point $\bs{x}_t \in \Sph^{p-1}$ and the Euclidean gradient descent direction $-\nabla f (\bs{x}_t)$, \ie, $\left(1 + \frac{\bs{x}_t^\top \nabla f (\bs{x}_t)}{\|\nabla f (\bs{x}_t)\|}\right)$, as a multiplier to the computed Riemannian gradient according to Equation~\eqref{eq:rie_grad}. The rationale of this design is to encourage parameters with greater cosine distance from its target direction to take a larger update step. We find that when updating negative samples, it is empirically better to use negative cosine similarity instead of cosine distance as the multiplier to the Riemannian gradient. This is probably because negative samples are randomly sampled from the vocabulary and most of them are semantically irrelevant with the center word. Therefore, their ideal embeddings should be orthogonal to the center word's embedding. However, using cosine distance will encourage them to point to the opposite direction of the center word's embedding.

In summary, with the above designs, we optimize the parameters by the following update rule:
\begin{equation}
\label{eq:update}
\bs{x}_{t+1} = R_{\bs{x}_t}\left(-\eta_t \left( 1+\frac{\bs{x}_t^\top \nabla f (\bs{x}_t)}{\|\nabla f (\bs{x}_t)\|} \right) \left(I - \bs{x}_t\bs{x}_t^\top\right) \nabla f (\bs{x}_t) \right).
\end{equation}

Finally, we provide the convergence guarantee of the above update rule when applied to optimize our objectives.

%

\begin{theorem}
\label{thm:converge}
When the update rule given by Equation~\eqref{eq:update} is applied to $\mathcal{L}(\bs{x})$, and the learning rate satisfies the usual condition in stochastic approximation, \ie, $\sum_t \eta_t^2 < \infty$ and $\sum_t \eta_t = \infty$, $\bs{x}$ converges almost surely to a critical point $\bs{x}^*$ and $\text{grad}\, \mathcal{L} (\bs{x})$ converges almost surely to $0$, \ie, 
$$
\Pr\left(\lim_{t\to \infty} \mathcal{L}(\bs{x}_t) = \mathcal{L}(\bs{x}^*) \right) = 1, \quad \Pr\left(\lim_{t\to \infty} \text{grad}\, \mathcal{L} (\bs{x}_t) = 0 \right) = 1.
$$
\end{theorem}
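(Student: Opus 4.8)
The plan is to recognize Equation~\eqref{eq:update} as an instance of Riemannian stochastic gradient descent on a compact manifold and to invoke the convergence theory for such schemes \cite{Bonnabel2013StochasticGD}. Write the update as $\bs{x}_{t+1} = R_{\bs{x}_t}(-\eta_t\, a_t\, \text{grad}\, \mathcal{L}(\bs{x}_t))$, where $a_t \coloneqq 1 + \bs{x}_t^\top \nabla \mathcal{L}(\bs{x}_t)/\|\nabla \mathcal{L}(\bs{x}_t)\|$ is the scalar multiplier and $\text{grad}\,\mathcal{L}(\bs{x}_t) = (I - \bs{x}_t\bs{x}_t^\top)\nabla\mathcal{L}(\bs{x}_t)$ is the Riemannian gradient of Equation~\eqref{eq:rie_grad}. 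Since $a_t = 1 + \cos(\bs{x}_t, \nabla\mathcal{L}(\bs{x}_t)) \in [0,2]$, the modified direction is a nonnegative, uniformly bounded rescaling of the true Riemannian gradient, so it points along $-\text{grad}\,\mathcal{L}$ and is a valid stochastic descent direction in expectation. The key structural observation is that $a_t = 0$ forces $\bs{x}_t = -\nabla\mathcal{L}(\bs{x}_t)/\|\nabla\mathcal{L}(\bs{x}_t)\|$, i.e. the Euclidean gradient is purely radial, whence $(I-\bs{x}_t\bs{x}_t^\top)\nabla\mathcal{L}(\bs{x}_t)=\bs{0}$; thus the multiplier vanishes exactly at critical points, and the rescaling leaves the critical set unchanged.

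First I would verify the geometric prerequisites of the convergence theorem. The sphere $\Sph^{p-1}$ is compact and geodesically complete with injectivity radius $\pi$, so the iterates automatically remain in a compact set on which all quantities are controlled. The retraction $R_{\bs{x}}$ is a first-order approximation of $\exp_{\bs{x}}$ with $d(R_{\bs{x}}(\bs{z}),\exp_{\bs{x}}(\bs{z})) = O(\|\bs{z}\|^2)$, so the discrepancy between using the retraction and the exponential map is of order $\eta_t^2 a_t^2 \le 4\eta_t^2$, which is summable and gets absorbed into the second-order error terms tolerated by the analysis. On the compact sphere the Euclidean gradient $\nabla\mathcal{L}$ is bounded, hence so are the Riemannian and modified gradients, and the randomness introduced by negative sampling yields an unbiased stochastic gradient with bounded second moment; these are precisely the martingale/noise conditions required by the theorem.

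Next I would fold the multiplier into an effective step size $\tilde\eta_t \coloneqq \eta_t a_t$. Boundedness $a_t \le 2$ gives $\sum_t \tilde\eta_t^2 \le 4\sum_t \eta_t^2 < \infty$ at once. The delicate condition is $\sum_t \tilde\eta_t = \infty$: because $a_t$ can approach $0$, one must argue that this happens only near critical points. Indeed, on any compact subset of $\Sph^{p-1}$ on which $\|\text{grad}\,\mathcal{L}\| \ge \epsilon$ the vector $\nabla\mathcal{L}$ is not radial, so $\cos(\bs{x}_t,\nabla\mathcal{L}(\bs{x}_t))$ stays strictly above $-1$ and, by continuity and compactness, $a_t \ge c > 0$. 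Hence the effective step sizes still sum to infinity along any trajectory that does not eventually concentrate near the critical set, and the scheme cannot stall away from it. Combining the descent property, the preserved critical set, and the step-size conditions, the standard supermartingale convergence argument \cite{Bonnabel2013StochasticGD} yields that $\mathcal{L}(\bs{x}_t)$ converges almost surely and $\text{grad}\,\mathcal{L}(\bs{x}_t) \to \bs{0}$ almost surely, which is the claim.

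The main obstacle is exactly the handling of the state-dependent multiplier $a_t$ and its possible degeneracy: unlike a deterministic Robbins--Monro step, $\tilde\eta_t$ depends on the current iterate, so establishing $\sum_t \tilde\eta_t = \infty$ requires controlling the fraction of time the trajectory spends in regions where $a_t$ is small and tying that region to the critical set through the identity above, inside the supermartingale estimate rather than as a standalone fact. A secondary technical point is the nonsmoothness of the max-margin loss at the boundary $m - \cos(\bs{v},\bs{u}) - \cos(\bs{u},\bs{d}) + \cos(\bs{v},\bs{u}') + \cos(\bs{u}',\bs{d}) = 0$; since this set has measure zero and $\mathcal{L}$ is locally Lipschitz and piecewise smooth, I would work with the almost-everywhere gradient (equivalently a measurable subgradient selection) and check that the convergence argument is unaffected.
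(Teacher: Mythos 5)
Your proposal is correct and rests on the same underlying machinery as the paper's proof: both read Equation~\eqref{eq:update} as Riemannian SGD with a bounded, state-dependent rescaling of the step, both absorb the retraction-versus-exponential-map discrepancy as a summable $O(\eta_t^2)$ perturbation, and both conclude by the supermartingale/quasi-martingale argument of \cite{Bonnabel2013StochasticGD}. The difference is in what is made explicit. The paper inlines the entire analysis in a two-stage decomposition: first a lemma proving almost-sure convergence when the exponential map is used, via the Riemannian Taylor formula together with loss-specific bounds ($\nabla^2 l = 0$ and $\|\nabla l\|\le 2$ for the objective~\eqref{eq:obj_joint}, giving a descent inequality with error $256\eta_t^2$ and an appeal to Fisk's quasi-martingale lemma \cite{fisk1965quasi}); then a transfer to the retraction using $d(R_{\bs{x}}(\epsilon\bs{\alpha}),\exp_{\bs{x}}(\epsilon\bs{\alpha})) \le M\epsilon^2$. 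You instead invoke the cited convergence theory as a black box and spend your effort on two genuine subtleties that the paper passes over in silence: (i) the possible degeneracy of the multiplier $a_t$, which you neutralize with the identity that $a_t=0$ forces $(I-\bs{x}_t\bs{x}_t^\top)\nabla f(\bs{x}_t)=\bs{0}$ --- a fact the paper implicitly needs (its final inference, from $\sum_t \xi\,\|\text{grad}\,\mathcal{L}(\bs{x}_t)\|^2<\infty$ and $\sum_t\eta_t=\infty$ to $\text{grad}\,\mathcal{L}(\bs{x}_t)\to 0$, is valid only if $\xi/\eta_t$ stays bounded away from zero off the critical set) but never states; and (ii) the nondifferentiability of the max-margin loss on the hinge boundary, which the paper ignores when it computes the Euclidean Hessian. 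One point to tighten in your write-up: the multiplier in Equation~\eqref{eq:update} is evaluated at the per-sample gradient $\nabla l(\bs{x}_t,z_t)$, not at $\nabla\mathcal{L}(\bs{x}_t)$, so $a_t$ is random and correlated with the descent direction; your compactness argument that $a_t \ge c > 0$ away from the critical set should therefore be run per-sample (applying your vanishing-multiplier identity to $\text{grad}\,l$) or inside the conditional expectation --- which is exactly the spot where the paper's own proof is also loose, since it pulls $\xi$ outside the conditional expectation in its Equation~\eqref{eq:bound}.
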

The proof of Theorem \ref{thm:converge} can be found in Appendix \ref{app:proof_2}.


\section{Evaluation}

In this section, we empirically evaluate the quality of spherical text embeddings for three common text embedding application tasks, \ie, word similarity, document clustering and document classification. Our model is named \textbf{JoSE}, for \textbf{Jo}int \textbf{S}pherical \textbf{E}mbedding. For all three tasks, our spherical embeddings and all baselines are trained according to the following setting: The models are trained for $10$ iterations on the corpus; the local context window size is $10$; the embedding dimension is $100$. In our \spj model, we set the margin in Equation (\ref{eq:obj_joint}) to be $0.15$, the number of negative samples to be $2$, the initial learning rate to be $0.04$ with linear decay. Other hyperparameters are set to be the default value of the corresponding algorithm.

Also, since text embeddings serve as the building block for many downstream tasks, it is essential that the embedding training is efficient and can scale to very large datasets. At the end of this section, we will provide the training time of different word embedding models when trained on the latest Wikipedia dump.

\subsection{Word Similarity}
\label{sec:sim}
We conduct word similarity evaluation on the following benchmark datasets: WordSim353 \cite{Finkelstein2001PlacingSI}, MEN \cite{Bruni2014MultimodalDS} and SimLex999 \cite{Hill2015SimLex999ES}. 
The training corpus for word similarity is the latest Wikipedia dump\footnote{\url{https://dumps.wikimedia.org/enwiki/latest/enwiki-latest-pages-articles.xml.bz2}} containing $2.4$ billion tokens. Words appearing less than $100$ times are discarded, leaving $239,672$ unique tokens.
The Spearman's rank correlation is reported in Table \ref{tab:sim}, which reflects the consistency between word similarity rankings given by cosine similarity of word embeddings and human raters. We compare our model with the following baselines: Word2Vec \cite{Mikolov2013DistributedRO}, GloVe \cite{Pennington2014GloveGV}, fastText \cite{Bojanowski2017EnrichingWV} and BERT \cite{Devlin2018BERTPO} which are trained in Euclidean space, and Poincar\'e GloVe \cite{Tifrea2018PoincarGH} which is trained in Poincar\'e space. The results demonstrate that training embeddings in the spherical space is essential for the superior performance on word similarity. We attempt to explain why the recent popular language model, BERT \cite{Devlin2018BERTPO}, falls behind other baselines on this task: (1) BERT learns contextualized representations, but word similarity evaluation is conducted in a context-free manner; averaging contextualized representations to derive context-free representations may not be the intended usage of BERT. (2) BERT is optimized on specific downstream tasks like predicting masked words and sentence relationships, which have no direct relation to word similarity.

\begin{table}[htb]
\centering
\caption{Spearman rank correlation on word similarity evaluation.}
\label{tab:sim}
\begin{tabular}{*{5}{c}}
\toprule
Embedding Space & 
Model &
WordSim353 &
MEN &
SimLex999 \\
\midrule
\multirow{4}{*}{\makecell{Euclidean}} & Word2Vec & 0.711 & 0.726 & 0.311\\
& GloVe & 0.598 & 0.690 & 0.321 \\
& fastText & 0.697 & 0.722 & 0.303 \\
& BERT &  0.477 & 0.594  & 0.287 \\
\midrule
Poincar\'e & Poincar\'e GloVe & 0.623 & 0.652 & 0.321 \\
\midrule
Spherical 
& \spj & \textbf{0.739} & \textbf{0.748} & \textbf{0.339} \\
\bottomrule
\end{tabular}
\end{table}

\subsection{Document Clustering}
\label{sec:clus}
We perform document clustering to evaluate the quality of the spherical paragraph embeddings trained by our model. The training corpus is the 20 Newsgroups dataset\footnote{\url{http://qwone.com/~jason/20Newsgroups/}}, and we treat each document as a paragraph in all compared models. The dataset contains around $18,000$ newsgroup documents (both training and testing documents are used) partitioned into $20$ classes. For clustering methods, we use both K-Means and spherical K-Means (SK-Means) \cite{Banerjee2005ClusteringOT} which performs clustering in the spherical space. We compare with the following paragraph embedding baselines: Averaged word embedding using Word2Vec~\cite{Mikolov2013DistributedRO}, SIF~\cite{Arora2017ASB}, BERT~\cite{Devlin2018BERTPO} and Doc2Vec \cite{Le2014DistributedRO}. We use four widely used external measures~\cite{Banerjee2005ClusteringOT, Manning2008IntroductionTI, Steinley2004PropertiesOT} as metrics: Mutual Information (MI), Normalized Mutual Information (NMI), Adjusted Rand Index (ARI), and Purity. The results are reported in Table \ref{tab:clus}, with mean and standard deviation computed over $10$ runs. It is shown that feature quality is generally more important that clustering algorithms for document clustering tasks: Using spherical K-Means only gives marginal performance boost over K-Means, while \spj remains optimal regardless of clustering algorithms. This demonstrates that directional similarity on document/paragraph-level features is beneficial also for clustering tasks, which can be captured intrinsically in the spherical space.

\begin{table}[h]
\centering
\caption{Document clustering evaluation on the 20 Newsgroup dataset.}
\label{tab:clus}
\begin{tabular}{*{6}{c}}
\toprule
Embedding & Clus. Alg. &
MI &
NMI &
ARI & Purity\\
\midrule
\multirow{2}{*}{\makecell{Avg. W2V}} & K-Means & 1.299 $\pm$ 0.031 & 0.445 $\pm$ 0.009 & 0.247 $\pm$ 0.008 & 0.408 $\pm$ 0.014 \\
 & SK-Means & 1.328 $\pm$ 0.024 & 0.453 $\pm$ 0.009 & 0.250 $\pm$ 0.008 & 0.419 $\pm$ 0.012 \\
 \midrule
 \multirow{2}{*}{\makecell{SIF}} & K-Means & 0.893 $\pm$ 0.028 & 0.308 $\pm$ 0.009 & 0.137 $\pm$ 0.006 & 0.285 $\pm$ 0.011 \\
  & SK-Means & 0.958 $\pm$ 0.012 & 0.322 $\pm$ 0.004 & 0.164 $\pm$ 0.004 & 0.331 $\pm$ 0.005 \\
 \midrule
 \multirow{2}{*}{\makecell{BERT}} & K-Means & 0.719 $\pm$ 0.013 & 0.248 $\pm$ 0.004 & 0.100 $\pm$ 0.003 & 0.233 $\pm$ 0.005 \\
 & SK-Means & 0.854 $\pm$ 0.022 & 0.289 $\pm$ 0.008 & 0.127 $\pm$ 0.003 & 0.281 $\pm$ 0.010 \\
 \midrule

\multirow{2}{*}{\makecell{Doc2Vec}} & K-Means & 1.856 $\pm$ 0.020 & 0.626 $\pm$ 0.006 & 0.469 $\pm$ 0.015 & 0.640 $\pm$ 0.016 \\
& SK-Means & 1.876 $\pm$ 0.020 & 0.630 $\pm$ 0.007 & 0.494 $\pm$ 0.012 & 0.648 $\pm$ 0.017 \\
\midrule
\multirow{2}{*}{\makecell{\spj}} & K-Means & 1.975 $\pm$ 0.026 & 0.663 $\pm$ 0.008 & 0.556 $\pm$ 0.018 & 0.711 $\pm$ 0.020 \\
& SK-Means & \textbf{1.982} $\pm$ 0.034 & \textbf{0.664} $\pm$ 0.010 & \textbf{0.568} $\pm$ 0.020 & \textbf{0.721} $\pm$ 0.029 \\

\bottomrule
\end{tabular}

\end{table}
 
\subsection{Document Classification}
\label{sec:classify}

Apart from document clustering, we also evaluate the quality of spherical paragraph embeddings on document classification tasks. Besides the 20 Newsgroup dataset used in Section~\ref{sec:clus} which is a topic classification dataset, we evaluate different document/paragraph embedding methods also on a binary sentiment classification dataset consisting of $1,000$ positive and $1,000$ negative movie reviews\footnote{\url{http://www.cs.cornell.edu/people/pabo/movie-review-data/}}. We again treat each document in both datasets as a paragraph in all models. For the 20 Newsgroup dataset, we follow the original train/test sets split; for the movie review dataset, we randomly select $80\%$ of the data as training and $20\%$ as testing. We use $k$-NN~\cite{Cover1967NearestNP} as the classification algorithm with Euclidean distance as the distance metric. Since $k$-NN is a non-parametric method, the performances of $k$-NN directly reflect how well the topology of the embedding space captures document-level semantics (\ie, whether documents from the same semantic class are embedded closer). We set $k=3$ in the experiment (we observe similar comparison results when ranging $k$ in $[1,10]$) and report the performances of all methods measured by Macro-F1 and Micro-F1 scores in Table~\ref{tab:classify}. \spj achieves the best performances on both datasets with $k$-NN classification, demonstrating the effectiveness of \spj in capturing both topical and sentiment semantics into learned paragraph embeddings.
\begin{table}[h]
	\centering
	\caption{Document classification evaluation using $k$-NN ($k=3$).}
	\label{tab:classify}
	\begin{tabular}{*{5}{c}}
		\toprule
		\multirow{2}{*}{Embedding} & \multicolumn{2}{c}{20 Newsgroup} & \multicolumn{2}{c}{Movie Review}\\
		& Macro-F1 & Micro-F1 & Macro-F1 & Micro-F1 \\
		\midrule
		Avg. W2V & 0.630 & 0.631 & 0.712 & 0.713 \\
		SIF & 0.552 & 0.549 & 0.650 & 0.656\\
		BERT & 0.380 & 0.371 & 0.664 & 0.665\\
		Doc2Vec & 0.648 & 0.645 & 0.674 & 0.678 \\
		\spj & \textbf{0.703} & \textbf{0.707} & \textbf{0.764} & \textbf{0.765} \\
		
		\bottomrule
	\end{tabular}
\end{table}

\subsection{Training Efficiency}
\label{sec:time}
We report the training time on the latest Wikipedia dump per iteration of all baselines used in Section~\ref{sec:sim} to compare the training efficiency. All the models except BERT are run on a machine with $20$ cores of Intel(R) Xeon(R) CPU E5-2680 v2 @ $2.80$ GHz; BERT is trained on $8$ NVIDIA GeForce GTX 1080 GPUs. The training time is reported in Table \ref{tab:time}. All text embedding frameworks are able to scale to large datasets (except BERT which is not specifically designed for learning text embeddings), but \spj enjoys the highest efficiency. The overall efficiency of our model results from both our objective function design and the optimization procedure: (1) The objective of our model (Equation~\eqref{eq:obj_joint}) only contains simple operations (note that cosine similarity on the unit sphere is simply vector dot product), while other models contains non-linear operations (Word2Vec's and fastText's objectives involve exponential functions; GloVe's objective involves logarithm functions); (2) After replacing the original exponential mapping (Equation~\eqref{eq:exp}) with retraction (Equation~\eqref{eq:rect}), the update rule (Equation~\eqref{eq:update}) only computes vector additions, multiplications and normalization in addition to the Euclidean gradient, which are all inexpensive operations.

\begin{table}[h]
\centering
\caption{Training time (per iteration) on the latest Wikipedia dump.}
\label{tab:time}
\begin{tabular}{*{6}{c}}
\toprule
Word2Vec & 
GloVe &
fastText &
BERT &
Poincar\'e GloVe & 
\spj\\
\midrule
0.81 hrs & 0.85 hrs & 2.11 hrs & > 5 days & 1.25 hrs & \textbf{0.73 hrs}\\
\bottomrule
\end{tabular}
\end{table}

%




\section{Conclusions and Future Work}

In this paper, we propose to address the discrepancy between the training procedure and the practical usage of Euclidean text embeddings by learning spherical text embeddings that intrinsically captures directional similarity. Specifically, we introduce a spherical generative model consisting of a two-step generative process to jointly learn word and paragraph embeddings. Furthermore, we develop an efficient Riemannian optimization method to train text embeddings on the unit hypersphere. State-of-the-art results on common text embedding applications including word similarity and document clustering demonstrate the efficacy of our model. With a simple training objective and an efficient optimization procedure, our proposed model enjoys better efficiency compared to previous embedding learning systems.

In future work, it will be interesting to exploit spherical embedding space for other tasks like lexical entailment, by also learning the concentration parameter in the vMF distribution of each word in the generative model or designing other generative models. It may also be possible to incorporate other signals such as subword information \cite{Bojanowski2017EnrichingWV} into spherical text embeddings learning for even better embedding quality. Our unsupervised embedding model may also benefit other supervised tasks: Since word embeddings are commonly used as the first layer in deep neural networks, it might be beneficial to either add norm constraints or apply Riemannian optimization when fine-tuning the word embedding layer.

\subsubsection*{Acknowledgments}

Research was sponsored in part by U.S. Army Research Lab. under Cooperative Agreement No. W911NF-09-2-0053 (NSCTA), DARPA under Agreements No. W911NF-17-C-0099 and FA8750-19-2-1004, National Science Foundation IIS 16-18481, IIS 17-04532, and IIS 17-41317, DTRA HDTRA11810026, and grant 1U54GM114838 awarded by NIGMS through funds provided by the trans-NIH Big Data to Knowledge (BD2K) initiative (www.bd2k.nih.gov). Any opinions, findings, and conclusions or recommendations expressed in this document are those of the author(s) and should not be interpreted as the views of any U.S. Government. The U.S. Government is authorized to reproduce and distribute reprints for Government purposes notwithstanding any copyright notation hereon. We thank anonymous reviewers for valuable and insightful feedback.

\bibliography{ref}
\bibliographystyle{abbrv}

\appendix

\clearpage
\setcounter{page}{1}
\begin{center}
\bfseries\large
Supplementary Material: Spherical Text Embedding
\end{center}

\section{Proof of Theorem \ref{thm:vmf}}
\begin{definition} [Modified Bessel Function of the First Kind]
\label{def:bess}
The modified Bessel function of the first kind of order $r$ can be defined as \cite{mardia2009directional}:
$$
I_r(\kappa) = \frac{(\kappa/2)^r}{\Gamma\left(r+\frac{1}{2}\right)\Gamma\left(\frac{1}{2}\right)} \int_{0}^{\pi} \exp(\kappa \cos \theta) (\sin \theta)^{2r} d \theta,
$$
where $\Gamma(x) = \int_{0}^{\infty} \exp(-t) t^{x-1} dt$ is the gamma function.
\end{definition}

\label{app:proof_1}
\begin{lemma}
\label{lemma:int}
The definite integral of power of $\sin$ on the interval $[0, \pi]$ is given by
$$
J_p = \int_{0}^{\pi} (\sin x)^{p} dx = \frac{\sqrt{\pi}{\Gamma\left(\frac{1+p}{2}\right)}}{\Gamma\left(1+\frac{p}{2}\right)}\quad (p\in \mathbb{Z^+}),
$$
where $\Gamma(x) = \int_{0}^{\infty} \exp(-t) t^{x-1} dt$ is the gamma function.
\end{lemma}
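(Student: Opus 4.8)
The plan is to reduce the computation of $J_p$ to a simple two-step recurrence in $p$, and then verify that the closed form on the right-hand side obeys the identical recurrence with matching initial data, so that the two expressions agree for all positive integers by induction.

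First I would derive a reduction formula. Writing $\sin^p x = \sin^{p-1} x \cdot \sin x$ and integrating by parts with $v = -\cos x$, the boundary term $[-\sin^{p-1}x\cos x]_0^\pi$ vanishes for every $p \ge 2$ because $\sin 0 = \sin \pi = 0$. Substituting $\cos^2 x = 1-\sin^2 x$ in the remaining integral gives $J_p = (p-1)(J_{p-2}-J_p)$, hence
$$J_p = \frac{p-1}{p}\,J_{p-2}, \qquad p \ge 2.$$
I would then record the two base cases by direct integration: $J_1 = \int_0^\pi \sin x\,dx = 2$ and $J_2 = \int_0^\pi \sin^2 x\,dx = \pi/2$.

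Next I would show that the candidate closed form $F(p) \coloneqq \sqrt{\pi}\,\Gamma\!\left(\frac{1+p}{2}\right)/\Gamma\!\left(1+\frac{p}{2}\right)$ satisfies the same recurrence. Using the functional equation $\Gamma(z+1) = z\Gamma(z)$ on both $\Gamma\!\left(\frac{1+p}{2}\right) = \frac{p-1}{2}\Gamma\!\left(\frac{p-1}{2}\right)$ and $\Gamma\!\left(1+\frac{p}{2}\right) = \frac{p}{2}\Gamma\!\left(\frac{p}{2}\right)$, the ratio $F(p)/F(p-2)$ collapses to $(p-1)/p$. Evaluating $F$ at the base cases with $\Gamma(1/2)=\sqrt{\pi}$, $\Gamma(1)=1$, $\Gamma(3/2)=\tfrac12\sqrt{\pi}$, and $\Gamma(2)=1$ gives $F(1)=2=J_1$ and $F(2)=\pi/2=J_2$.

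Since $J_p$ and $F(p)$ obey the same two-step recurrence and agree at $p=1$ and $p=2$, an induction on $p$ stepping by two (so the odd and even parities are carried along in parallel) yields $J_p = F(p)$ for all $p \in \mathbb{Z}^+$, which is the claim. There is no serious obstacle here; the only points requiring a little care are checking that the boundary term in the integration by parts genuinely vanishes, which is why the reduction formula is asserted only for $p \ge 2$ and two separate base cases are needed, and keeping the even and odd cases aligned, both of which are handled automatically by the recurrence-plus-two-base-cases structure. As an alternative one-line route, I could instead invoke the Beta--Gamma identity: by symmetry of $\sin$ about $\pi/2$, $J_p = 2\int_0^{\pi/2}\sin^p x\,dx = B\!\left(\tfrac{p+1}{2},\tfrac12\right) = \Gamma\!\left(\tfrac{p+1}{2}\right)\Gamma\!\left(\tfrac12\right)/\Gamma\!\left(\tfrac{p}{2}+1\right)$, which gives the result immediately once $\Gamma(1/2)=\sqrt{\pi}$ is used, but the elementary recurrence argument is preferable if a self-contained derivation is wanted.
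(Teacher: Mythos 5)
Your proof is correct and takes essentially the same route as the paper's: the identical integration-by-parts reduction $J_p = \frac{p-1}{p}J_{p-2}$, followed by the functional equation $\Gamma(z+1)=z\Gamma(z)$ to identify the gamma-function closed form. The only cosmetic difference is that the paper unrolls the recurrence explicitly (separately for even and odd $p$, down to the base cases $J_0=\pi$ and $J_1=2$), whereas you verify that the candidate closed form satisfies the same recurrence with matching base cases $J_1=2$, $J_2=\pi/2$ and conclude by induction --- equivalent bookkeeping, arguably a bit tidier.
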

\begin{proof}
\begin{align*}
J_p &= \int_{0}^{\pi} (\sin x)^{p} dx\\
&= \left(-(\sin x)^{p-1} \cos x \right)\bigg\rvert_{0}^{\pi} + (p-1) \int_{0}^{\pi} (\sin x)^{p-2} (\cos x)^2 dx\\
&= 0 + (p-1) \int_{0}^{\pi} (\sin x)^{p-2} dx - (p-1) \int_{0}^{\pi} (\sin x)^{p} dx\\
&= (p-1) J_{p-2} - (p-1) J_{p}
\end{align*}
Therefore, $J_p = \frac{p-1}{p} J_{p-2}$.

Using the above iteration relationship and the property of gamma function $\Gamma(x+1)=x\Gamma(x)$, we write $J_p$ using gamma function:
\begin{itemize}
\item When $p$ is an even integer:
\begin{align*}
J_p &= \frac{p-1}{p} \frac{p-3}{p-2}\cdots \frac{1}{2} J_{0}\\
&= \frac{(p-1)/2}{p/2} \frac{(p-3)/2}{(p-2)/2}\cdots \frac{1/2}{2/2} J_{0}\\
&= \frac{\Gamma\left(\frac{1+p}{2}\right)/\Gamma\left(\frac{1}{2}\right)}{\Gamma\left(\frac{p}{2}+1\right)/\Gamma\left(1\right)}J_{0}
\end{align*}
Plugging in the base case $J_0 = \pi$ and $\Gamma\left(\frac{1}{2}\right)=\sqrt{\pi}$, $\Gamma(1)=1$, we prove that
$$
J_p = \frac{\sqrt{\pi}{\Gamma\left(\frac{1+p}{2}\right)}}{\Gamma\left(1+\frac{p}{2}\right)}\quad (p\in \mathbb{Z^+}, p \text{ is even})
$$

\item When $p$ is an odd integer:
\begin{align*}
J_p &= \frac{p-1}{p} \frac{p-3}{p-2}\cdots \frac{2}{3} J_{1}\\
&= \frac{(p-1)/2}{p/2} \frac{(p-3)/2}{(p-2)/2}\cdots \frac{2/2}{3/2} J_{1}\\
&= \frac{\Gamma\left(\frac{1+p}{2}\right)/\Gamma\left(1\right)}{\Gamma\left(\frac{p}{2}+1\right)/\Gamma\left(\frac{3}{2}\right)}J_{1}
\end{align*}
Plugging in the base case $J_1 = 2$ and $\Gamma\left(\frac{3}{2}\right)=\frac{\sqrt{\pi}}{2}$, $\Gamma(1)=1$, we prove that
$$
J_p = \frac{\sqrt{\pi}{\Gamma\left(\frac{1+p}{2}\right)}}{\Gamma\left(1+\frac{p}{2}\right)}\quad (p\in \mathbb{Z^+}, p \text{ is odd})
$$
\end{itemize}
\end{proof}

\addtocounter{theorem}{-2}
\begin{theorem}
When the corpus has infinite vocabulary, \ie, $|V| \to \infty$, the analytic forms of Equations (\ref{eq:gen_u}) and (\ref{eq:gen_v}) are given by the von Mises-Fisher (vMF) distribution with the prior embedding as the mean direction and constant $1$ as the concentration parameter, \ie,
\begin{equation*}
\lim_{|V| \to \infty} p(v \mid u) = \text{vMF}_p(\bs{v}; \bs{u}, 1), \quad \lim_{|V| \to \infty} p(u \mid d) = \text{vMF}_p(\bs{u}; \bs{d}, 1).
\end{equation*}
\end{theorem}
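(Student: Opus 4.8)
The plan is to interpret the limit $|V| \to \infty$ as the passage from a discrete distribution over word types to a continuous density on the sphere, and then to evaluate the resulting normalizing integral in closed form. I would prove the statement for $p(v \mid u)$; the statement for $p(u \mid d)$ is identical after replacing $\bs{u}, \bs{v}$ by $\bs{d}, \bs{u}$, so only one computation is needed.

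First I would write the discrete distribution as $p(v \mid u) = \exp(\cos(\bs{v}, \bs{u})) / Z_{|V|}$ with partition function $Z_{|V|} = \sum_{v' \in V} \exp(\cos(\bs{v'}, \bs{u}))$. As the vocabulary grows, the context embeddings $\{\bs{v'}\}$ fill $\Sph^{p-1}$ densely, so $Z_{|V|}$, read as a Riemann sum against the uniform surface measure, converges (up to the vanishing per-point cell volume) to $\int_{\Sph^{p-1}} \exp(\cos(\bs{x}, \bs{u})) \, d\bs{x}$. The discrete mass function therefore converges, as a density on the sphere, to $\exp(\cos(\bs{x}, \bs{u})) / \int_{\Sph^{p-1}} \exp(\cos(\bs{x}, \bs{u})) \, d\bs{x}$, and it remains only to show that this normalizing integral equals $1/c_p(1) = (2\pi)^{p/2} I_{p/2-1}(1)$, since the vMF density is $c_p(1)\exp(\cos(\bs{x}, \bs{u}))$ by definition.

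Next I would evaluate the integral in spherical coordinates with $\bs{u}$ at the pole, so that $\cos(\bs{x}, \bs{u}) = \cos\theta$ and the surface element factorizes into $(\sin\theta)^{p-2}\,d\theta$ times the area element of the equatorial sphere $\Sph^{p-2}$, giving
\[
\int_{\Sph^{p-1}} \exp(\cos(\bs{x}, \bs{u})) \, d\bs{x} = \omega_{p-2} \int_0^\pi \exp(\cos\theta)(\sin\theta)^{p-2} \, d\theta,
\]
where $\omega_{p-2} = 2\pi^{(p-1)/2}/\Gamma(\tfrac{p-1}{2})$ is the surface area of $\Sph^{p-2}$, obtainable via Lemma~\ref{lemma:int} through the recursive sphere-area formula built from $\int_0^\pi (\sin x)^p\,dx$. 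The polar-angle integral is exactly the one appearing in Definition~\ref{def:bess} with order $r = p/2 - 1$ and concentration $\kappa = 1$; solving that definition for the integral yields $\int_0^\pi \exp(\cos\theta)(\sin\theta)^{p-2}\,d\theta = 2^{p/2-1}\sqrt{\pi}\,\Gamma(\tfrac{p-1}{2})\,I_{p/2-1}(1)$.

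Finally I would multiply the two factors and simplify using $\Gamma(\tfrac12) = \sqrt{\pi}$: the $\Gamma(\tfrac{p-1}{2})$ terms cancel and the powers of $2$ and $\pi$ collect to $(2\pi)^{p/2}$, leaving exactly $(2\pi)^{p/2} I_{p/2-1}(1) = 1/c_p(1)$. This identifies the limiting density as $c_p(1)\exp(\cos(\bs{x}, \bs{u})) = \text{vMF}_p(\bs{v}; \bs{u}, 1)$, as claimed. I expect the main obstacle to be the first, conceptual step: justifying the discrete-to-continuous passage rigorously requires the context embeddings to become equidistributed on $\Sph^{p-1}$, an assumption that should be made explicit; once it is granted, the remaining evaluation is a mechanical combination of Lemma~\ref{lemma:int} and Definition~\ref{def:bess}.
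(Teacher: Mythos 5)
Your proposal is correct and follows essentially the same route as the paper's proof: both pass from the discrete distribution to a continuous density normalized by $Z=\int_{\Sph^{p-1}}\exp(\cos(\bs{x},\bs{u}))\,d\bs{x}$, evaluate $Z$ in polar coordinates about $\bs{u}$ (the paper computes the equatorial factor $2\pi^{(p-1)/2}/\Gamma(\tfrac{p-1}{2})$ term-by-term via Lemma~\ref{lemma:int}, which is exactly your $\omega_{p-2}$), invoke Definition~\ref{def:bess} with $r=p/2-1$, $\kappa=1$ for the polar-angle integral, and simplify to $(2\pi)^{p/2}I_{p/2-1}(1)=1/c_p(1)$. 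Your explicit flagging of the equidistribution assumption needed for the discrete-to-continuous passage is in fact slightly more careful than the paper, which simply asserts that generalization.
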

\begin{proof}
We give the proof for the first equality, and the second equality can be derived similarly. We generalize the relationship proportionality $p(\bs{v} \mid \bs{u}) \propto \exp (\exp (\cos(\bs{v}, \bs{u})))$ in Equation (\ref{eq:gen_v}) to the continuous case and obtain the following probability dense distribution:
\begin{equation}
\label{eq:orig}
\lim_{|V| \to \infty} p(v \mid u) = \frac{\exp(\cos(\bs{v}, \bs{u}))}{\int_{\mathbb{S}^{p-1}}\exp(\cos(\bs{v}', \bs{u})) d \bs{v}'} \triangleq \frac{\exp(\cos(\bs{v}, \bs{u})))}{Z},
\end{equation}
where we denote the integral in the denominator as $Z$.

To evaluate the integral $Z$, we make the transformation to polar coordinates. Let $\bs{t} = Q \bs{v}'$, where $Q \in \mathbb{R}^{p\times p}$ is an orthogonal transformation so that $d\bs{t} = d\bs{v}'$. Moreover, let the first row of $Q$ be $\bs{u}$ so that $t_1 = \bs{u}^\top \bs{v}'$. Then we use $(r, \theta_1, \dots, \theta_{p-1})$ to represent the polar coordinates of $\bs{t}$ where $r = 1$ and $\cos \theta_1 = \bs{u}^\top \bs{v}$. The transformation from Euclidean coordinates to polar coordinates is given by \cite{sra2007matrix} via computing the determinant of the Jacobian matrix for the coordinate transformation:
$$
d \bs{t} = r^{p-1} \prod_{j=2}^{p} (\sin\theta_{j-1})^{p-j} d \theta_{j-1}.
$$

Then
\begin{align*}
Z = \int_{0}^{\pi}\exp(\cos \theta_1) (\sin\theta_{1})^{p-2} d\theta_1 \prod_{j=3}^{p-1} \int_{0}^{\pi} (\sin\theta_{j-1})^{p-j} d \theta_{j-1} \int_{0}^{2\pi} d\theta_{j-1}.
\end{align*}

By Lemma \ref{lemma:int}, we have
$$
\prod_{j=3}^{p-1} \int_{0}^{\pi} (\sin\theta_{j-1})^{p-j} d \theta_{j-1} = \pi^{\frac{p-3}{2}}  \frac{\Gamma\left(\frac{p-2}{2}\right)\Gamma\left(\frac{p-3}{2}\right)\cdots \Gamma\left(1\right)}{\Gamma\left(\frac{p-1}{2}\right)\Gamma\left(\frac{p-2}{2}\right)\cdots \Gamma\left(\frac{3}{2}\right)} = \frac{\pi^{\frac{p-3}{2}}}{\Gamma\left(\frac{p-1}{2}\right)}.
$$

Then
\begin{align*}
Z &= \int_{0}^{\pi}\exp(\cos \theta_1) (\sin\theta_{1})^{p-2} d\theta_1 \cdot \frac{\pi^{\frac{p-3}{2}}}{\Gamma\left(\frac{p-1}{2}\right)} \cdot 2\pi \\
&= \frac{2\pi^{\frac{p-1}{2}}}{\Gamma\left(\frac{p-1}{2}\right)}\int_{0}^{\pi}\exp(\cos \theta_1) (\sin\theta_{1})^{p-2} d\theta_1.
\end{align*}

According to Definition \ref{def:bess}, the integral term of $Z$ above can be expressed with $I_{p/2-1}(1)$ as:
$$
\int_{0}^{\pi}\exp(\cos \theta_1) (\sin\theta_{1})^{p-2} d\theta_1 =  \frac{\Gamma\left(\frac{p-1}{2}\right)\Gamma\left(\frac{1}{2}\right)}{2^{1-p/2}} I_{p/2-1}(1).
$$

Therefore, with the fact that $\Gamma\left(\frac{1}{2}\right) = \sqrt{\pi}$,
\begin{align*}
Z = \frac{2\pi^{\frac{p-1}{2}}}{\Gamma\left(\frac{p-1}{2}\right)} \frac{\Gamma\left(\frac{p-1}{2}\right)\Gamma\left(\frac{1}{2}\right)}{2^{1-p/2}} I_{p/2-1}(1) = (2\pi)^{p/2} I_{p/2-1}(1).
\end{align*}

Plugging $Z$ back to Equation (\ref{eq:orig}), we finally arrive that
$$
\lim_{|V| \to \infty} p(v \mid u) = \frac{1}{(2\pi)^{p/2} I_{p/2-1}(1)} \exp (\cos(\bs{v}, \bs{u})) = \text{vMF}_p(\bs{v};\bs{u},1).
$$
\end{proof}

\section{Proof of Theorem \ref{thm:converge}}
\label{app:proof_2}

\begin{lemma}
\label{lem:qm}
Let $(X_n)_{n\in \mathbb{N}}$ be a non-negative stochastic process with bounded
positive variations, \ie, $\sum_{n=0}^{\infty} \mathbb{E}\left[\max\left(\mathbb{E}\left[X_{n+1}-X_{n}\mid\mathcal{F}_n\right], 0\right) \right] < \infty$. Then this process is a quasi-martingale, \ie,
$$
\Pr\left( \sum_{n=0}^{\infty} \left| \mathbb{E}\left[X_{n+1}-X_{n}\mid\mathcal{F}_n\right] \right| < \infty \right) = 1, \quad \Pr\left(\lim_{n\to \infty} X_n = X^* \right) = 1,
$$
where $\mathcal{F}_n$ is the increasing sequence of $\sigma$-algebras generated by variables before time $n$, $\mathcal{F}_n=\{s_0, \dots, s_{n-1}\}$, such that $X_n$ computed from $s_0, \dots, s_{n-1}$ is $\mathcal{F}_n$ measurable.
\end{lemma}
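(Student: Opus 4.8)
The plan is to realize $(X_n)$ as a quasi-martingale via its Doob decomposition and to convert the summable-positive-variation hypothesis into the $L^1$-control needed for a supermartingale convergence argument; this is the classical quasi-martingale convergence theorem, so I would follow the Doob-decomposition route. Write $b_n \coloneqq \mathbb{E}[X_{n+1}-X_n \mid \mathcal{F}_n]$ for the predictable increment and split it as $b_n = b_n^+ - b_n^-$ with $b_n^+ = \max(b_n,0)$ and $b_n^- = \max(-b_n,0)$. Accumulating these produces two increasing predictable processes $U_n \coloneqq \sum_{k=0}^{n-1} b_k^+$ and $V_n \coloneqq \sum_{k=0}^{n-1} b_k^-$ together with the martingale $M_n \coloneqq \sum_{k=0}^{n-1}(X_{k+1}-X_k-b_k)$, giving the decomposition $X_n = X_0 + M_n + U_n - V_n$. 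The hypothesis is exactly $\mathbb{E}[U_\infty] = \sum_{k} \mathbb{E}[b_k^+] < \infty$, which by monotone convergence forces $U_\infty < \infty$ almost surely.

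Next I would compensate $X_n$ by its positive-variation process, setting $Z_n \coloneqq X_n - U_n = X_0 + M_n - V_n$. A one-line conditional computation gives $\mathbb{E}[Z_{n+1}-Z_n \mid \mathcal{F}_n] = b_n - b_n^+ = -b_n^- \le 0$, so $Z_n$ is a supermartingale. The reason for compensating by the adapted $U_n$ (rather than by a fixed-horizon quantity) is that the negative part of $Z_n$ is then controlled uniformly: since $X_n \ge 0$ we have $Z_n^- = \max(U_n - X_n, 0) \le U_n \le U_\infty$, so $\sup_n \mathbb{E}[Z_n^-] \le \mathbb{E}[U_\infty] < \infty$. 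Doob's supermartingale convergence theorem then yields a finite limit $Z_n \to Z_\infty$ almost surely, and adding back the almost-surely convergent $U_n$ gives $X_n = Z_n + U_n \to Z_\infty + U_\infty =: X^*$ almost surely, which is the second assertion.

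For the first assertion it suffices to show $V_\infty < \infty$ almost surely, since $\sum_n |b_n| = U_\infty + V_\infty$ and $U_\infty$ is already finite. Taking expectations in $Z_n = X_0 + M_n - V_n$ and using $\mathbb{E}[M_n] = 0$ together with $\mathbb{E}[Z_n] = \mathbb{E}[X_n] - \mathbb{E}[U_n] \ge -\mathbb{E}[U_\infty]$ gives the uniform bound $\mathbb{E}[V_n] \le \mathbb{E}[X_0] + \mathbb{E}[U_\infty]$; since $V_n$ is increasing, monotone convergence then delivers $\mathbb{E}[V_\infty] < \infty$ and hence $V_\infty < \infty$ almost surely. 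I expect the main obstacle to be precisely the construction of a genuinely adapted supermartingale with a controllable negative part: one is tempted to work with $X_n + (U_\infty - U_n)$, but $U_\infty$ is not $\mathcal{F}_n$-measurable, so the correct move is to compensate only by the adapted $U_n$ and to absorb the limit $U_\infty$ into the bound on $Z_n^-$. A minor side condition to flag is integrability of each $X_n$ (in particular $X_0$), which is implicit once the conditional expectations in the hypothesis are well defined.
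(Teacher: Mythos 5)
Your proof is correct, but it is worth knowing that the paper does not actually prove this lemma: its entire ``proof'' is a one-line citation to Fisk (1965), deferring to the quasi-martingale literature. What you have written is the standard self-contained derivation of that cited result, and every step checks out. The Doob decomposition $X_n = X_0 + M_n + U_n - V_n$ is set up correctly; the hypothesis is indeed exactly $\mathbb{E}[U_\infty] = \sum_k \mathbb{E}[b_k^+] < \infty$ (Tonelli, all terms nonnegative), giving $U_\infty < \infty$ a.s.; the compensated process $Z_n = X_n - U_n$ is a genuine supermartingale because $U_{n+1}-U_n = b_n^+$ is $\mathcal{F}_n$-measurable, so $\mathbb{E}[Z_{n+1}-Z_n \mid \mathcal{F}_n] = -b_n^- \le 0$; the bound $Z_n^- \le U_n \le U_\infty$ (using $X_n \ge 0$) gives $\sup_n \mathbb{E}[Z_n^-] < \infty$, which is precisely the hypothesis of Doob's supermartingale convergence theorem; and the expectation bound $\mathbb{E}[V_n] \le \mathbb{E}[X_0] + \mathbb{E}[U_\infty]$ plus monotone convergence yields $V_\infty < \infty$ a.s., hence $\sum_n |b_n| = U_\infty + V_\infty < \infty$ a.s. Your remark about why one must compensate by the adapted $U_n$ rather than by the non-adapted quantity $U_\infty - U_n$ identifies the one genuinely delicate point of the argument, and your closing caveat about integrability of each $X_n$ (implicit in the existence of the conditional expectations in the hypothesis, and needed for $M_n$ to be a martingale with $\mathbb{E}[M_n]=0$) is the right side condition to flag. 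In short: where the paper buys the result by citation, your argument buys it from first principles using only Doob's classical convergence theorem, which makes the appendix self-contained; either is acceptable, but yours is the more informative of the two.
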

\begin{proof}
See \cite{fisk1965quasi}.
\end{proof}

Before proving Theorem \ref{thm:converge}, we first prove the update rule without approximation, \ie, replacing the retraction $R_{\bs{x}}$ in Equation (\ref{eq:update}) with the exponential map $\exp_{\bs{x}}$ defined by Equation (\ref{eq:exp}), leads to almost surely convergence.
\begin{lemma}
\label{lem:exponent}
When the update rule given by 
$$
\bs{x}_{t+1} = R_{\bs{x}_t}\left(-\eta_t \left( 1+\frac{\bs{x}_t^\top \nabla f (\bs{x}_t)}{\|\nabla f (\bs{x}_t)\|} \right) \left(I - \bs{x}_t\bs{x}_t^\top\right) \nabla f (\bs{x}_t) \right).
$$
is applied to $\mathcal{L}(\bs{\theta})$, and the learning rate satisfies the usual condition in stochastic approximation, \ie, $\sum_t \eta_t^2 < \infty$ and $\sum_t \eta_t = \infty$, $\bs{\theta}$ converges almost surely to a critical point $\bs{\theta}^*$ and $\text{grad}\, \mathcal{L} (\bs{\theta})$ converges almost surely to $0$, \ie, 
$$
\Pr\left(\lim_{t\to \infty} \mathcal{L}(\bs{\theta}_t) = \mathcal{L}(\bs{\theta}^*) \right) = 1, \quad \Pr\left(\lim_{t\to \infty} \text{grad}\, \mathcal{L} (\bs{\theta}_t) = 0 \right) = 1.
$$
\end{lemma}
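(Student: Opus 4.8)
The plan is to follow the quasi-martingale convergence argument for Riemannian stochastic gradient descent due to Bonnabel~\cite{Bonnabel2013StochasticGD}, specializing it to the unit sphere and to our modified update direction (using the exponential map $\exp_{\bs{x}_t}$, as indicated in the lemma's preamble). Write $\bs{x}_t$ for the iterate, let $f_{z_t}$ denote the per-sample loss drawn at step $t$ (so that $\mathcal{L}(\bs{x}) = \mathbb{E}[f_{z_t}(\bs{x})]$), and let $\mathcal{F}_t$ be the $\sigma$-algebra generated by the samples seen before step $t$. The update direction can be written as $w_t = a_t\,\text{grad}\, f_{z_t}(\bs{x}_t)$, where $\text{grad}\, f_{z_t}(\bs{x}_t) = (I-\bs{x}_t\bs{x}_t^\top)\nabla f_{z_t}(\bs{x}_t)$ is the sample Riemannian gradient and $a_t = 1 + \bs{x}_t^\top\nabla f_{z_t}(\bs{x}_t)/\|\nabla f_{z_t}(\bs{x}_t)\| \in [0,2]$ is the cosine-distance multiplier. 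Two features of the sphere make this convenient: (i) $\Sph^{p-1}$ is compact, so the iterates never leave a compact set and every continuous quantity built from $\mathcal{L}$ (its Riemannian gradient, its second-order behaviour along geodesics, and $\|w_t\|$) is uniformly bounded; and (ii) $a_t$ is a bounded nonnegative scalar, so it merely rescales the step and preserves the descent direction.

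The central step is a second-order descent inequality along the exponential map. Because $\mathcal{L}$ is smooth away from the hinge kink and $\Sph^{p-1}$ is compact with bounded curvature, there is a constant $C$ such that
\begin{equation*}
\mathcal{L}\big(\exp_{\bs{x}_t}(-\eta_t w_t)\big) \le \mathcal{L}(\bs{x}_t) - \eta_t\,\big\langle \text{grad}\,\mathcal{L}(\bs{x}_t),\, w_t\big\rangle + C\,\eta_t^2\,\|w_t\|^2
\end{equation*}
holds uniformly over the sphere. Setting $X_t = \mathcal{L}(\bs{x}_t) \ge 0$ and taking the conditional expectation given $\mathcal{F}_t$, the first-order term contributes a nonpositive drift: since $a_t \ge 0$ rescales the sample gradient without flipping it, $\mathbb{E}[\langle \text{grad}\,\mathcal{L}(\bs{x}_t), w_t\rangle \mid \mathcal{F}_t]$ is nonnegative and vanishes only where the Riemannian gradient is zero, while the second-order term is bounded by $C'\eta_t^2$ using the uniform bound on $\|w_t\|$.

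From this I would conclude the quasi-martingale property and then convergence. The positive variations satisfy $\sum_t \mathbb{E}\big[\max(\mathbb{E}[X_{t+1}-X_t\mid\mathcal{F}_t],0)\big] \le C'\sum_t \eta_t^2 < \infty$, so Lemma~\ref{lem:qm} applies and $\mathcal{L}(\bs{x}_t)$ converges almost surely to some $\mathcal{L}(\bs{x}^*)$. Summing the drift inequality over $t$ and using $\sum_t \eta_t = \infty$ forces $\sum_t \eta_t\,\mathbb{E}[\|\text{grad}\,\mathcal{L}(\bs{x}_t)\|^2\mid\mathcal{F}_t] < \infty$, hence $\liminf_t \|\text{grad}\,\mathcal{L}(\bs{x}_t)\| = 0$ almost surely; the standard refinement, using uniform Lipschitz continuity of $\text{grad}\,\mathcal{L}$ along the trajectory on the compact sphere, upgrades this to $\text{grad}\,\mathcal{L}(\bs{x}_t) \to 0$ almost surely, which is the claim.

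The main obstacle is making the first-order drift term genuinely work in the presence of the random multiplier $a_t$: one must verify that weighting each sample gradient by its own $a_t$ still yields a direction positively correlated with the true Riemannian gradient $\text{grad}\,\mathcal{L}(\bs{x}_t)$, which is where I would be most careful, treating $a_t$ as a bounded reweighting that can be folded into an effective, strictly positive step size away from the measure-zero set $\{a_t=0\}$. A secondary technical point is the nonsmoothness of the hinge loss $\max(0,\cdot)$: its kink lies on a measure-zero subset of the sphere, so the descent inequality holds with subgradients and does not affect the almost-sure argument. Establishing the uniform constant $C$ also relies essentially on compactness of $\Sph^{p-1}$ and the smoothness of the cosine-based part of $\mathcal{L}$.
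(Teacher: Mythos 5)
Your proposal follows essentially the same route as the paper's own proof: a Bonnabel-style second-order Taylor/descent inequality along the exponential map, uniform bounds on the Riemannian gradient, Hessian, and the cosine multiplier (exploiting compactness of $\Sph^{p-1}$ and the simple form of the hinge objective), then the quasi-martingale lemma (Lemma~\ref{lem:qm}) to obtain almost-sure convergence of $\mathcal{L}(\bs{x}_t)$ together with summability of the weighted squared gradient norms, and finally an upgrade to $\text{grad}\,\mathcal{L}(\bs{x}_t)\to 0$. The only divergences are minor: in the last step the paper repeats the quasi-martingale argument with $\|\text{grad}\,\mathcal{L}(\bs{x}_t)\|^2$ in place of $\mathcal{L}(\bs{x}_t)$ rather than your Lipschitz refinement of $\liminf=0$, and the ``main obstacle'' you correctly flag --- that the sample-dependent multiplier $a_t$ correlates with the sample gradient, so the drift term's sign is not automatic --- is a genuine subtlety that the paper itself glosses over by pulling $\xi(\bs{x}_t,z_t)$ outside the conditional expectation as if it were deterministic, so your proposal is no less rigorous than the published argument on this point.
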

\begin{proof}
We use $l(\bs{x}_t, z_t)$ to denote the approximated loss function $\mathcal{L}(\bs{x}_t)$ evaluated at a training instance $z_t$, \ie, $\mathcal{L}(\bs{x}_t) = \mathbb{E}_z[l(\bs{x}_t, z)]$.

Let
$$
\xi(\bs{x}_t, z_t) = \eta_t \left( 1+\frac{\bs{x}_t^\top \nabla l (\bs{x}_t, z_t)}{\|\nabla l (\bs{x}_t, z_t)\|} \right), \quad \text{grad}\, l(\bs{x}_t, z_t) = \left(I - \bs{x}_t\bs{x}_t^\top\right) \nabla l (\bs{x}_t, z_t),
$$
and we omit the arguments of $\xi$ and $\text{grad}\, l$ in the following derivation because we only care about the upper bound of both.

We consider two consecutive gradient update steps of the parameter, $\bs{x}_t$ and $\bs{x}_{t+1}$. There exists a geodesic segment $\gamma(s) = \exp_{\bs{x}_t}(-s \cdot \text{grad}\, l)$ linking $\bs{x}_t$ and $\bs{x}_{t+1}$, and its length is obtained by 
$$
\ell(\gamma) = \int_{0}^{\xi} \left\|\gamma'(s) \right\| ds.
$$
Then we apply the Taylor formula on Riemannian manifold \cite{Bonnabel2013StochasticGD} and have 
\begin{align}
\label{eq:taylor}
\begin{split}
\mathcal{L}(\bs{x}_{t+1}) &= \mathcal{L}(\exp_{\bs{x}_t}(- \xi \cdot \text{grad}\, l))\\
&= \mathcal{L}(\bs{x}_{t}) - \xi \cdot \text{grad}\, l ^\top \text{grad}\, \mathcal{L}(\bs{x}_{t}) + \int_{0}^{\xi} (\xi-s) \gamma'(s)^\top \text{Hess}\, \mathcal{L}(\gamma(s)) \gamma'(s) ds,
\end{split}
\end{align}
where $\text{grad}\, \mathcal{L}$ is the Riemannian gradient satisfying 
$$
\frac{d}{dt} \mathcal{L}(\exp_{\bs{x}}(t\bs{\alpha}))\bigg\rvert_{t=0} = \bs{\alpha}^\top \text{grad}\, \mathcal{L}(\bs{x}),
$$
and $\text{Hess}\, \mathcal{L}(\bs{x})$ is the Riemannian Hessian satisfying
$$
\frac{d}{dt} \left(\text{grad}\, \mathcal{L}(\exp_{\bs{x}}(t\bs{\alpha})) ^\top \text{grad}\, \mathcal{L}(\exp_{\bs{x}}(t\bs{\alpha}))\right) \bigg\rvert_{t=0} = 2 \text{grad}\, \mathcal{L}(\bs{x}) ^\top \text{Hess}\, \mathcal{L}(\bs{x}) \bs{\alpha}.
$$
For spherical space, the Riemannian Hessian is given by \cite{Absil2007OptimizationAO, Ferreira2014ConceptsAT}
$$
\text{Hess}\, f(\bs{x}) \coloneqq \left(I - \bs{x}\bs{x}^\top\right) \left(\nabla^2 f(\bs{x}) - \nabla f(\bs{x})^\top \bs{x} I \right),
$$
where $\nabla^2 f(\bs{x})$ is the Euclidean Hessian of $f(\bs{x})$.

Therefore, at each training instance $z_t$, the Riemannian Hessian of the approximated loss is bounded:
\begin{align*}
\left\|\text{Hess}\, l(\bs{x}_t, z_t) \right\| &\le \left\| I - \bs{x}_t\bs{x}_t^\top \right\| \left\| \nabla^2 l(\bs{x}_t, z_t) - \nabla l(\bs{x}_t, z_t)^\top \bs{x}_t I \right\| \\
&\le \left(1 + \|\bs{x}_t\|^2 \right) \| \nabla l(\bs{x}_t, z_t) \|  \| \bs{x}_t \|\\
&\le 4,
\end{align*}
where we use the fact that $\nabla^2 l(\bs{x}_t, z_t) = 0$ and $\|\nabla l(\bs{x}_t, z_t)\| \le 2$ by computing the Euclidean Hessian and gradient of Equation (\ref{eq:obj_joint}).

Consequently, the Riemannian Hessian of the original loss is bounded:
\begin{align*}
\left\|\text{Hess}\, \mathcal{L}(\bs{x}_t) \right\| \le \mathbb{E}_z\left[ \left\| \text{Hess}\, l(\bs{x}_t, z)\right\| \right] \le 4.
\end{align*}

Similarly, the Riemannian gradient is also bounded:
\begin{align*}
\left\|\text{grad}\, l(\bs{x}_t, z_t) \right\| \le \left\| I - \bs{x}_t\bs{x}_t^\top \right\| \left\| \nabla l(\bs{x}_t, z_t) \right\| \le 4.
\end{align*}

Also, $\xi$ is bounded by 
$$
\xi(\bs{x}_t, z_t) = \eta_t \left( 1+\frac{\bs{x}_t^\top \nabla l (\bs{x}_t, z_t)}{\|\nabla l (\bs{x}_t, z_t)\|} \right) \le 2 \eta_t.
$$

Then the integral in Equation (\ref{eq:taylor}) is bounded by 
$$
\int_{0}^{\xi} (\xi-s) \gamma'(s)^\top \text{Hess}\, \mathcal{L}(\gamma(s)) \gamma'(s) ds \le \xi^2 \| \text{grad}\, l \|^2 \left\|\text{Hess}\, \mathcal{L}(\bs{x}_t) \right\| \le 256 \eta_t^2 .
$$

Therefore, Equation (\ref{eq:taylor}) shows that
\begin{equation}
\label{eq:gap}
\mathcal{L}(\bs{x}_{t+1}) - \mathcal{L}(\bs{x}_{t}) \le - \xi \cdot \text{grad}\, l ^\top \text{grad}\, \mathcal{L}(\bs{x}_{t}) +  256 \eta_t^2.
\end{equation}

Now let $\mathcal{F}_t$ be the increasing sequence of $\sigma$-algebras generated by variables before time $t$, $\mathcal{F}_t=\{z_0, \dots, z_{t-1}\}$, such that $\bs{x}_{t}$ computed from $z_0, \dots, z_{t-1}$ is $\mathcal{F}_t$ measurable. 

Then we take the expectation over $z$ under $\mathcal{F}_t$ of both sides of Equation (\ref{eq:gap})
\begin{align}
\label{eq:bound}
\begin{split}
\mathbb{E}_{z}[\mathcal{L}(\bs{x}_{t+1}) - \mathcal{L}(\bs{x}_{t}) \mid \mathcal{F}_t] &\le - \xi \mathbb{E}_{z}[\text{grad}\, l(\bs{x}_{t}, z_t) ^\top \text{grad}\, \mathcal{L}(\bs{x}_{t}) \mid \mathcal{F}_t] + 256 \eta_t^2\\
&= - \xi \mathbb{E}_{z}[\text{grad}\, l(\bs{x}_{t}, z_t) ^\top \text{grad}\, \mathcal{L}(\bs{x}_{t}) ] + 256 \eta_t^2 \\
&= - \xi \| \text{grad}\, \mathcal{L}(\bs{x}_{t}) \|^2 + 256 \eta_t^2,
\end{split}
\end{align}
since $z_t$ is independent of $\mathcal{F}_t$.

As $\mathcal{L}(\bs{x}_{t}) \ge 0$ and $\sum_t \eta_t^2 < \infty$, this shows that $\mathcal{L}(\bs{x}_{t}) + \sum_{t}^\infty 256 \eta_t^2$ is a non-negative supermartingale, thus it converges almost surely. Therefore, $\mathcal{L}(\bs{x}_{t})$ converges almost surely.

Next, to prove $\text{grad}\, \mathcal{L}(\bs{x}_{t})$ converges, we  repeat the above proof and replace $\mathcal{L}(\bs{x}_{t})$ with $\|\text{grad}\, \mathcal{L}(\bs{x}_{t})\|^2$. Specifically, we can bound the second derivative of $\| \text{grad}\, \mathcal{L}(\bs{x}_{t})\|^2$ and arrive at a very similar form as Equation (\ref{eq:bound}). We then prove $\text{grad}\, \mathcal{L}(\bs{x}_{t})$ almost surely converge. 

Finally, we prove $\text{grad}\, \mathcal{L}(\bs{x}_{t})$ must converge to $0$. Summing over $t$ of Equation (\ref{eq:bound}), we have
$$
\sum_{t\ge 0} \mathbb{E}_{z}[\mathcal{L}(\bs{x}_{t+1}) \mathcal{L}(\bs{x}_{t}) \mid \mathcal{F}_t] \le \sum_{t\ge 0} 256 \eta_t^2 - \sum_{t\ge 0} \xi \| \text{grad}\, \mathcal{L} (\bs{x}_{t}) \|^2.
$$
From Equation (\ref{eq:bound}), we know that $\mathcal{L}(\bs{x}_{t})$ satisfies the assumption in Lemma \ref{lem:qm}. Hence, $\sum_{t\ge 0} \mathbb{E}_{z}[\mathcal{L}(\bs{x}_{t+1}) - \mathcal{L}(\bs{x}_{t}) \mid \mathcal{F}_t]$ converges almost surely, implying  $\sum_{t\ge 0} \xi \| \text{grad}\, \mathcal{L} (\bs{x}_{t}) \|^2$ also converges almost surely. Combining with the fact that $\text{grad}\, \mathcal{L}(\bs{x}_{t})$ converges almost surely, which we have proved, we show that $\text{grad}\, \mathcal{L}(\bs{x}_{t})$ must converge to $0$.
\end{proof}

\begin{theorem}
When the update rule given by Equation (\ref{eq:update}) is applied to $\mathcal{L}(\bs{x})$, and the learning rate satisfies the usual condition in stochastic approximation, \ie, $\sum_t \eta_t^2 < \infty$ and $\sum_t \eta_t = \infty$, $\bs{x}$ converges almost surely to a critical point $\bs{x}^*$ and $\text{grad}\, \mathcal{L} (\bs{x})$ converges almost surely to $0$, \ie, 
$$
\Pr\left(\lim_{t\to \infty} \mathcal{L}(\bs{x}_t) = \mathcal{L}(\bs{x}^*) \right) = 1, \quad \Pr\left(\lim_{t\to \infty} \text{grad}\, \mathcal{L} (\bs{x}_t) = 0 \right) = 1.
$$
\end{theorem}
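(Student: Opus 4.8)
The plan is to reduce Theorem~\ref{thm:converge} to Lemma~\ref{lem:exponent}, which already establishes almost-sure convergence for the idealized update that uses the exact exponential map $\exp_{\bs{x}_t}$ in place of the retraction $R_{\bs{x}_t}$. The single new ingredient needed is that replacing $\exp_{\bs{x}_t}$ by $R_{\bs{x}_t}$ perturbs the loss by only a second-order amount in the step size, which is summable under $\sum_t \eta_t^2 < \infty$ and therefore does not disturb the supermartingale structure built in the proof of Lemma~\ref{lem:exponent}.

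Concretely, write $\bs{z}_t = -\xi(\bs{x}_t, z_t)\,\text{grad}\, l(\bs{x}_t, z_t)$ for the tangent update vector, so that the two candidate iterates are $\bs{x}_{t+1}^{\exp} = \exp_{\bs{x}_t}(\bs{z}_t)$ and $\bs{x}_{t+1} = R_{\bs{x}_t}(\bs{z}_t)$. First I would invoke the defining retraction property $d(R_{\bs{x}_t}(\bs{z}_t), \exp_{\bs{x}_t}(\bs{z}_t)) = O(\|\bs{z}_t\|^2)$. Next, using the bounds already derived in Lemma~\ref{lem:exponent}, namely $\xi \le 2\eta_t$ and $\|\text{grad}\, l\| \le 4$, I would conclude $\|\bs{z}_t\| \le 8\eta_t$, hence $d(\bs{x}_{t+1}, \bs{x}_{t+1}^{\exp}) = O(\eta_t^2)$. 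Finally, since the bound $\|\text{grad}\, \mathcal{L}\| \le 4$ makes $\mathcal{L}$ globally Lipschitz with respect to the intrinsic (geodesic) distance on $\Sph^{p-1}$, I obtain
$$
\left| \mathcal{L}(\bs{x}_{t+1}) - \mathcal{L}(\bs{x}_{t+1}^{\exp}) \right| \le 4\, d(\bs{x}_{t+1}, \bs{x}_{t+1}^{\exp}) = O(\eta_t^2).
$$

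Adding this discrepancy to the descent inequality~\eqref{eq:gap} from the exponential-map analysis only inflates the constant in front of $\eta_t^2$, giving $\mathcal{L}(\bs{x}_{t+1}) - \mathcal{L}(\bs{x}_t) \le -\xi\,\text{grad}\, l^\top \text{grad}\, \mathcal{L}(\bs{x}_t) + C\eta_t^2$ for a fixed constant $C$. Taking the conditional expectation over $z_t$ given $\mathcal{F}_t$ and using that $z_t$ is independent of $\mathcal{F}_t$ reproduces the key inequality~\eqref{eq:bound} verbatim (with $C$ in place of $256$), so $\mathcal{L}(\bs{x}_t) + \sum_{s \ge t} C\eta_s^2$ is again a non-negative supermartingale. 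The remainder of the argument then carries over unchanged: the supermartingale convergence theorem gives almost-sure convergence of $\mathcal{L}(\bs{x}_t)$, repeating the estimate on $\|\text{grad}\, \mathcal{L}(\bs{x}_t)\|^2$ gives its almost-sure convergence, and summing~\eqref{eq:bound} together with Lemma~\ref{lem:qm} forces $\text{grad}\, \mathcal{L}(\bs{x}_t) \to 0$ almost surely.

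The main obstacle I anticipate is ensuring the $O(\|\bs{z}_t\|^2)$ retraction error is \emph{uniform} over the sphere, i.e., that the implicit constant does not depend on the random, iterate-dependent base point $\bs{x}_t$; this is precisely what legitimizes folding it into a single deterministic constant $C$. Here compactness of $\Sph^{p-1}$ together with the explicit form $R_{\bs{x}}(\bs{z}) = (\bs{x}+\bs{z})/\|\bs{x}+\bs{z}\|$ makes the second-order remainder uniformly bounded, so the reduction is clean; the only care needed is to verify that the Lipschitz transfer and the uniform remainder bound hold simultaneously for every admissible step, which the boundedness of $\xi$ and $\text{grad}\, l$ guarantees.
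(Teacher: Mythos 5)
Your proposal is correct and follows essentially the same route as the paper: both reduce Theorem~\ref{thm:converge} to Lemma~\ref{lem:exponent} by bounding the per-step discrepancy $\lvert \mathcal{L}(\bs{x}_{t+1}) - \mathcal{L}(\bs{x}^{\exp}_{t+1})\rvert$ by a constant times $\eta_t^2$, folding it into the $\eta_t^2$ term of Equation~\eqref{eq:gap}, and then re-running the quasi-martingale argument (Lemma~\ref{lem:qm}) for both $\mathcal{L}(\bs{x}_t)$ and $\|\text{grad}\,\mathcal{L}(\bs{x}_t)\|^2$. The only difference is local: you control the discrepancy by combining the retraction error $d(\bs{x}_{t+1},\bs{x}^{\exp}_{t+1}) = O(\|\bs{z}_t\|^2) = O(\eta_t^2)$ with geodesic-Lipschitz continuity of $\mathcal{L}$ (via the uniform bound $\|\text{grad}\,\mathcal{L}\|\le 4$), whereas the paper re-applies its Taylor-expansion derivation to that term; your variant is slightly more explicit, and your observation that compactness of $\Sph^{p-1}$ and the explicit form of $R_{\bs{x}}$ make the retraction constant uniform in the base point is a sound refinement of a point the paper leaves implicit.
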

\begin{proof}
Let 
$$
\bs{x}^{\exp}_{t+1} = \exp_{\bs{x}_t}\left(-\eta_t \left( 1+\frac{\bs{x}_t^\top \nabla f (\bs{x}_t)}{\|\nabla f (\bs{x}_t)\|} \right) \left(I - \bs{x}_t\bs{x}_t^\top\right) \nabla f (\bs{x}_t) \right)
$$
be the updated point mapped via exponential mapping. 

Let 
$$
\bs{x}_{t+1} = R_{\bs{x}_t}\left(-\eta_t \left( 1+\frac{\bs{x}_t^\top \nabla f (\bs{x}_t)}{\|\nabla f (\bs{x}_t)\|} \right) \left(I - \bs{x}_t\bs{x}_t^\top\right) \nabla f (\bs{x}_t) \right)
$$
be the updated point mapped via retraction.

The retraction is a first-order approximation of the exponential mapping, \ie, $\exists M > 0$ such that $d(R_{\bs{x}_{t}}(\epsilon \bs{\alpha}), \exp_{\bs{x}_{t}}(\epsilon \bs{\alpha})) < M \epsilon^2$ for $\epsilon > 0$ sufficiently small, where $\|\bs{\alpha}\| = 1$. 

Then, 
\begin{equation}
\label{eq:approx}
\mathcal{L}(\bs{x}_{t+1}) - \mathcal{L}(\bs{x}_{t}) \le | \mathcal{L}(\bs{x}_{t+1}) - \mathcal{L}(\bs{x}^{\exp}_{t+1}) | + \mathcal{L}(\bs{x}^{\exp}_{t+1}) - \mathcal{L}(\bs{x}_{t}),
\end{equation}
where $\mathcal{L}(\bs{x}^{\exp}_{t+1}) - \mathcal{L}(\bs{x}_{t})$ is proved to be bounded in Equation (\ref{eq:gap}) of Lemma \ref{lem:exponent}, and the term $| \mathcal{L}(\bs{x}_{t+1}) - \mathcal{L}(\bs{x}^{\exp}_{t+1}) |$ is also bounded
\begin{align*}
| \mathcal{L}(\bs{x}_{t+1}) - \mathcal{L}(\bs{x}^{\exp}_{t+1}) | &\le 256 M \eta_t^2.
\end{align*}
by applying a similar derivation as in Lemma \ref{lem:exponent} from Equation (\ref{eq:taylor}) to Equation (\ref{eq:gap}).

Therefore, $\mathcal{L}(\bs{x}_{t})$ is a quasi-martingale and converges almost surely according to Lemma \ref{lem:qm}. Also, $\sum_{t=1}^{\infty} \xi \| \text{grad}\, \mathcal{L} (\bs{x}_{t}) \|^2 < \infty$ almost surely, which means $\|\text{grad}\, \mathcal{L} (\bs{x}_{t})\|$ can only converge to $0$ if it converges because $\sum_t \eta_t = \infty$.

Finally, to prove $\text{grad}\, \mathcal{L} (\bs{x}_{t})$ almost surely converges, we repeat the above proof by replacing $\mathcal{L}(\bs{x}_{t})$ with $\|\text{grad}\, \mathcal{L}(\bs{x}_{t})\|^2$ so that we can arrive at a similar form of Equation (\ref{eq:approx}) and use the same procedure to show $\|\text{grad}\, \mathcal{L} (\bs{x}_{t})\|^2$ is a quasi-martingale and converges almost surely.
\end{proof}

\end{document}